\newtheorem{theorem}{Theorem}
\newcommand{\revise}[1]{\textcolor{black}{#1}}
\begin{document}
% \linenumbers

\title{Monge-Ampere Regularization for Learning Arbitrary Shapes from Point Clouds}

\author{Chuanxiang Yang, Yuanfeng Zhou, Guangshun Wei, Long Ma, Junhui Hou,~\IEEEmembership{Senior Member,~IEEE},\\ Yuan Liu and Wenping Wang,~\IEEEmembership{Fellow,~IEEE}

\thanks{ This work was supported in part by National Natural Science Foundation of China under Grant 62172257, in part by Natural Science Foundation of Shandong Province under Grant ZR2024ZD12, in part by the NSFC Excellent Young Scientists Fund 62422118, and in part by the Hong Kong RGC under Grants 11219324 and 11219422. \textit{(Corresponding author: Yuanfeng Zhou.) }}

\thanks{Chuanxiang Yang, Yuanfeng Zhou, Guangshun Wei, Long Ma are with the School of Software, Shandong University, Jinan 250100, China (e-mail: chxyang2023@gmail.com; yfzhou@sdu.edu.cn; guangshunwei@gmail.com; malong@sdu.edu.cn).}
\thanks{Junhui Hou is with the Department of Computer Science,  City University of Hong Kong, Hong Kong (e-mail:jh.hou@cityu.edu.hk).}
\thanks{Yuan Liu is with the College of Computing and Data Science, Nanyang Technological University, Singapore (e-mail: liuyuanwhuer@gmail.com).}
\thanks{Wenping Wang is with the Department of Computer Science and Engineering, Texas A\&M University, College Station, TX 77843 USA (e-mail: wenping@tamu.edu).}}
% <-this % stops a space
% \thanks{Manuscript received April 19, 2021; revised August 16, 2021.}
% }

% The paper headers
\markboth{Journal of \LaTeX\ Class Files,~Vol.~14, No.~8, August~2021}
{Shell \MakeLowercase{\textit{et al.}}: A Sample Article Using IEEEtran.cls for IEEE Journals}

\IEEEpubid{0000--0000/00\$00.00~\copyright~2024 IEEE}
% Remember, if you use this you must call \IEEEpubidadjcol in the second
% column for its text to clear the IEEEpubid mark.
\maketitle

\begin{abstract}
As commonly used implicit geometry representations, the signed distance function (SDF) is limited to modeling watertight shapes, while the unsigned distance function (UDF) is capable of representing various surfaces. However, its inherent theoretical shortcoming, i.e., the non-differentiability at the zero-level set, would result in sub-optimal reconstruction quality.
In this paper, we propose the scaled-squared distance function (S\textsuperscript{2}DF), a novel implicit surface representation for modeling \textit{arbitrary} surface types. S\textsuperscript{2}DF does not distinguish between inside and outside regions while effectively addressing the non-differentiability issue of UDF at the zero-level set. We demonstrate that S\textsuperscript{2}DF satisfies a second-order partial differential equation of Monge-Ampere-type, allowing us to develop a learning pipeline that leverages a novel Monge-Ampere regularization to directly learn S\textsuperscript{2}DF from raw unoriented point clouds \textit{without} supervision from ground-truth S\textsuperscript{2}DF values. Extensive experiments across multiple datasets show that our method significantly outperforms state-of-the-art supervised approaches that require ground-truth surface information as supervision for training. The source code is available at \url{https://github.com/chuanxiang-yang/S2DF}. 
\end{abstract}

\begin{IEEEkeywords}
Implicit neural representation, distance function, surface reconstruction.
\end{IEEEkeywords}

\section{Introduction}
\IEEEPARstart{R}{econstructing} surfaces from discrete point clouds is a fundamental challenge in computer vision and graphics. Over the past few decades, many exceptional algorithms have been developed, with implicit surface reconstruction being a particularly prominent category. With the rapid advancement of deep learning, an increasing number of implicit neural representation methods have achieved impressive performance, which utilize neural networks to model the implicit representations of target surfaces.

The signed distance function (SDF) has become a popular choice for implicit surface representation. Recent works \cite{gropp2020implicit,sitzmann2020implicit}, inspired by the understanding that a viscosity solution to the Eikonal partial differential equation corresponds to the SDF of the target surface, employ Eikonal regularization and boundary condition losses to train neural networks for directly learning SDF from raw point clouds. These methods frame SDF learning as a solution to the Eikonal equations, training a neural network for each point cloud. Due to their simplicity, efficiency, and explainability, this category of methods has undergone continuous refinement through many research efforts \cite{ben2022digs,wang2023neural,koneputugodage2023octree,yang2024stabilizing}.

While methods based on Eikonal regularization have achieved impressive results, their applicability is limited to modeling watertight shapes due to the inherent constraints of SDF. In contrast, the unsigned distance function (UDF) can represent arbitrary shapes since it does not differentiate between interior and exterior regions. However, although UDF, like SDF, maintains a gradient magnitude of 1 in differentiable regions, it does not satisfy the Eikonal equation due to its non-differentiability at the zero-level set. This non-differentiability complicates the application of Eikonal regularization for learning UDF and increases instability in neural networks near the zero-level set during convergence, making it challenging for differentiable neural networks to accurately learn this critical region. To solve these issues, DUDF \cite{fainstein2024dudf} introduces a new implicit representation through the hyperbolic scaling of UDF and defines a new Eikonal problem. However, as an overfitting-based method, DUDF relies on supervision with ground-truth unsigned distances when calculating its newly proposed Eikonal regularization, which limits its practical application.

\begin{figure*}[t]
    \centering
    \includegraphics[width=\textwidth]{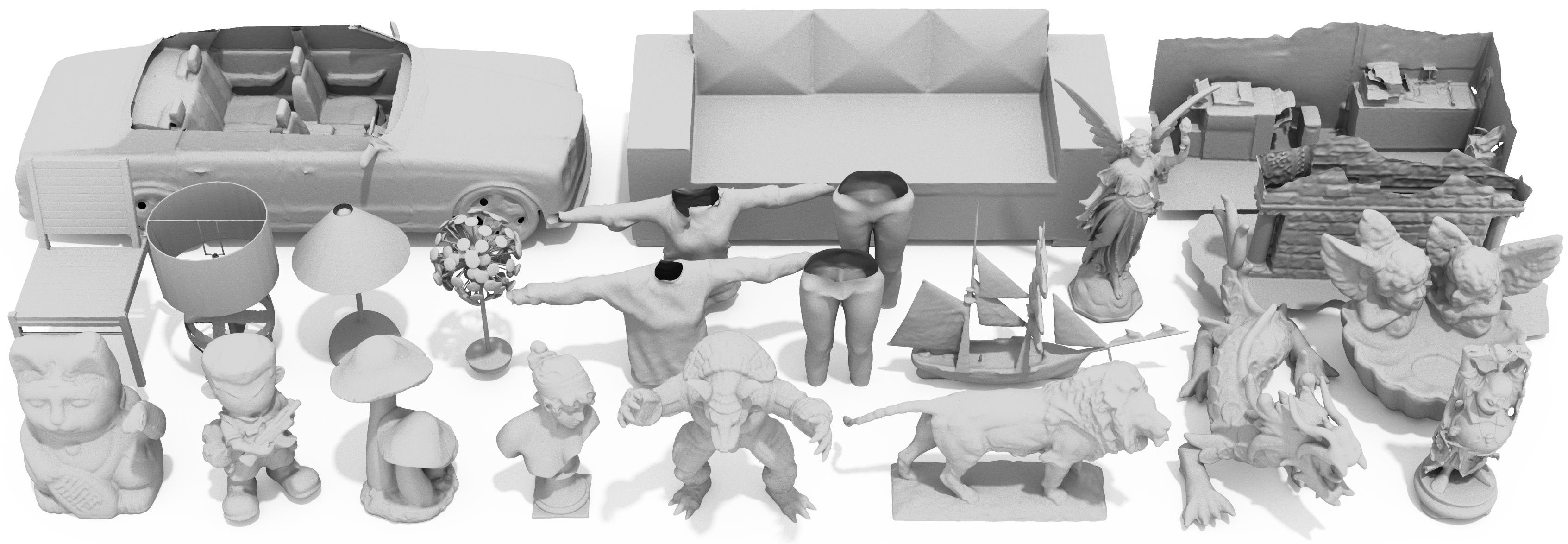}
    \vspace{-8mm}
    \caption{A gallery of surfaces of various types reconstructed from point clouds, including closed surfaces and open surfaces with boundaries, using our method.}
    \label{fig:gallery}
\end{figure*}

\IEEEpubidadjcol

In this paper, we propose the scaled-squared distance function (S\textsuperscript{2}DF) as a novel implicit surface representation to address these challenges. While simple in concept, S\textsuperscript{2}DF does not differentiate between interior and exterior, allowing it to represent arbitrary shapes while remaining differentiable at the zero-level set. Moreover, we observe that the Hessian of S\textsuperscript{2}DF consistently has one identical eigenvalue at arbitrary points in differentiable regions, indicating that it satisfies a Monge-Ampere-type partial differential equation \cite{trudinger2008monge}. Building on this insight, we propose Monge-Ampere regularization, along with appropriate boundary conditions, as the loss function to optimize  neural networks directly learning S\textsuperscript{2}DF from raw unoriented point clouds. Our method avoids the need for ground-truth values at arbitrary points in space; it only requires the ground-truth values of the input point clouds, which are naturally zero. Our method trains a separate neural network for each input point cloud, effectively handling diverse data distributions compared to supervised approaches. We evaluate our method on multiple datasets for the task of surface reconstruction from point clouds. Experimental results demonstrate that our method recovers more precise geometric details and outperforms state-of-the-art approaches, as illustrated in Fig. \ref{fig:gallery}.

In summary, the main contributions of our paper are two-fold:
\begin{enumerate}
\item We introduce S\textsuperscript{2}DF, a novel implicit surface representation capable of representing shapes of arbitrary types while remaining differentiable at the zero-level set.
\item We propose Monge-Ampere regularization to directly learn S\textsuperscript{2}DF from raw unoriented point clouds, eliminating the need for ground-truth S\textsuperscript{2}DF values as supervision during training.
\end{enumerate}

The remainder of this paper is organized as follows. Section \ref{Related Work} reviews implicit surface reconstruction methods from the past few decades. Section \ref{method} introduces our method, including the definition of S\textsuperscript{2}DF, its mathematical properties, and the approach for learning S\textsuperscript{2}DF without supervision
from ground-truth S\textsuperscript{2}DF values. In Section \ref{experiments}, we present experiments to evaluate our method and compare it with state-of-the-art approaches. Section \ref{limitation} discusses the limitations of our method. Finally, in Section \ref{conclusion}, we provide a summary of our method.

\section{Related Work}
\label{Related Work}
 In this section, we briefly review implicit surface reconstruction methods and related techniques over the past few decades. \revise{We also refer readers to \cite{huang2022surface,berger2017survey,sulzer2023survey} for the comprehensive survey.}

\subsection{Traditional Implicit Approaches}
In recent decades, numerous traditional implicit reconstruction methods \cite{ohtake2005multi,hoppe1992surface,oztireli2009feature} have been extensively explored and applied in surface reconstruction. The moving least-square methods (MLS) \cite{kolluri2008provably,shen2004interpolating,alexa2001point} use the weighted average of signed distances to oriented tangent planes supported at the nearby points to approximate SDF. The methods based on radial basis function \cite{huang2019variational,carr2001reconstruction,li2016sparse,turk2002modelling} leverage the combination of radial basis functions to represent SDF. Poisson Surface Reconstruction \cite{kazhdan2006poisson} and its subsequent variants \cite{kazhdan2013screened,kazhdan2020poisson,bolitho2009parallel} cast fitting occupancy field into solving the Poisson equation. To eliminate the dependency of PSR on normals, iPSR \cite{hou2022iterative} iteratively invokes PSR until convergence and takes as input point samples with normals directly computed from the surface obtained in the preceding iteration in each iteration. Stochastic PSR \cite{sellan2022stochastic} extends PSR with the statistical formalism of the Gaussian Process and represents the reconstructed shape as a modified Gaussian Process instead of outputting an implicit function.

\subsection{Implicit Neural Representations}
With the rapid development of deep learning, neural implicit representations \cite{lin2022surface,huang2022neural,liu2021deep,wang2023neural} have demonstrated significant competitiveness in modeling fine geometric details and complex topologies. Existing neural implicit representation methods typically use occupancy field (OF), signed distance function (SDF), or unsigned distance function (UDF) to represent shapes. We provide individual overviews of these three categories.
\subsubsection{OF-based Implicit Neural Representation}
Onet \cite{mescheder2019occupancy} as the pioneering method based on OF employs an encoder to learn a global latent code for each shape, followed by a decoder that takes the latent code and a query point as input and outputs the occupancy of the query point. ConvOnet \cite{peng2020convolutional} combines convolutional encoders with implicit occupancy decoders to allow for integrating local information and improve the reconstruction quality. POCO \cite{boulch2022poco} leverages point convolution to compute a latent vector for each input point and performs a learning-based interpolation on nearest neighbors using inferred weights for high-fidelity reconstruction. ALTO \cite{wang2023alto} introduces alternating latent topologies to further enhance the quality of the reconstruction.
\subsubsection{SDF-based Implicit Neural Representation}
DeepSDF \cite{park2019deepsdf} optimizes a latent code and outputs the signed distances of query points. LIG \cite{jiang2020local} and DeepLS \cite{chabra2020deep} divide 3D shapes into local regions and encode each one independently. Subsequently, LP-DIF \cite{wang2023lp} trains multiple decoders to further improve reconstruction quality. SAL \cite{atzmon2020sal} and SALD \cite{atzmon2020sald} explore sign agnostic learning to recover SDF from raw data. Neural-Pull \cite{ma2021neural} learns SDF by pulling query 3D locations to their closest points on the surface. IGR \cite{gropp2020implicit} proposes an Eikonal term to regularize neural networks towards the SDF of underlying surfaces. \revise{Incorporating IGR’s loss function, SIREN \cite{sitzmann2020implicit} leverages periodic activation functions and a corresponding initialization scheme to enable neural networks to capture fine details.} DiGS \cite{ben2022digs}, Neural-Singular-Hessian \cite{wang2023neural}, OG-INR \cite{koneputugodage2023octree} and StEik \cite{yang2024stabilizing} respectively propose different constraints to extend SIREN to unoriented point clouds. \revise{PG-SDF \cite{koneputugodage2024small} introduces a point-guided optimization approach to guide the network towards the correct SDF values for surface reconstruction from unoriented point clouds.} While these methods produce impressive reconstruction results, they can only represent watertight objects because they assume that the surface divides the entire space into inside and outside.

\subsubsection{UDF-based Implicit Neural Representation}
 UDF is a more general implicit representation. NDF \cite{chibane2020neural} uses the same shape latent encoding as IF-Nets \cite{chibane2020implicit} and trains a decoder to predict the unsigned distances to the underlying surface for query points. HSDF \cite{wang2022hsdf} simultaneously learns UDF and an additional sign field. Similarly, GIFS \cite{ye2022gifs} learns both UDF and the relationships between every two different points. GeoUDF \cite{ren2023geoudf} introduces a geometry-guided learning method for UDF and gradient estimation, but it requires a considerable amount of GPU memory when dealing with dense point clouds. The methods mentioned above all require UDF ground truth, which is difficult and time-consuming to obtain. And these methods struggle to generalize to beyond the distribution of their training data. Inspired by Neural-Pull, CAP-UDF \cite{zhou2022learning,zhou2024cap} utilizes a field consistency loss to directly learn UDF from raw point clouds. LevelSetUDF \cite{zhou2023learning} proposes to learn a more continuous zero-level set in UDF with level set projections. However, they are unable to effectively capture intricate geometric details, and the reconstructed surfaces sometimes exhibit ghost geometry. UODF \cite{lu2024unsigned} introduces the unsigned orthogonal distance field, while DUDF \cite{fainstein2024dudf} presents a hyperbolic scaling of UDF. However, as overfitting-based methods, both UODF and DUDF require supervision with ground-truth distance values when learning their proposed implicit functions.

\begin{figure}
    \centering
    \begin{overpic}[width=\linewidth]{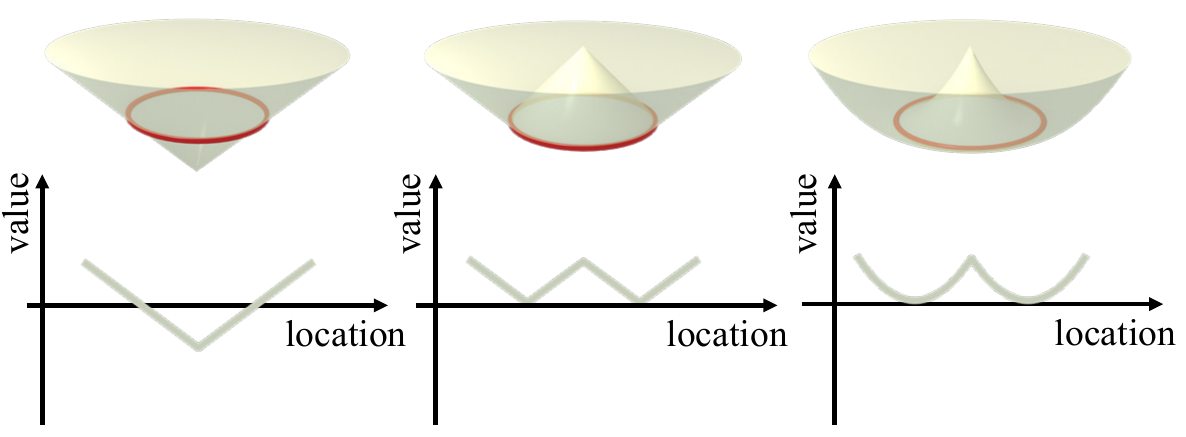}%grid,tics=5,
               \put(13.6, -1.5){\small{SDF}}
               \put(45.8, -1.5){\small{UDF}}
               \put(78, -1.5){\small{S\textsuperscript{2}DF}}
    \end{overpic}
    \caption{Visualization for the SDF, UDF and S\textsuperscript{2}DF of a circle curve. S\textsuperscript{2}DF is introduced to solve the non-differentiability of UDF at the zero-level set while retaining the ability to represent arbitrary shapes.}
    \label{fig:defferent implicit representations}
\end{figure}
\section{Proposed Method}
\label{method}
\textit{Overview}. Implicit surface representation involves utilizing the level set of a function $t(\boldsymbol{x}): \mathbb{R}^3 \rightarrow \mathbb{R}$ to define the surface $\mathcal{S}$:
\begin{equation}
  \mathcal{S} = \{\boldsymbol{x} \in \mathbb{R}^3 \mid t(\boldsymbol{x})=r  \}, 
\end{equation}
where $r$ is 0.5 for OF and $r$ is 0 for SDF and UDF. In particular, implicit neural representation refers to employing a neural network $f(\boldsymbol{x};\theta)$, parameterized by $\theta$, to approximate the function $t(\boldsymbol{x})$.

Currently, SDF and UDF are the most commonly used implicit surface representations. SDF assumes that a surface separates space into interior and exterior regions, restricting it to representing only watertight surfaces. On the contrary, UDF can depict arbitrary surfaces as it does not differentiate between inside and outside; however, it lacks differentiability at the zero-level set. 
Both SDF and UDF uphold a gradient magnitude of 1 within differentiable regions, indicating that their function values exhibit consistent and sufficient variation as the distance from the target surface changes.

Our goal is to create an implicit representation that combines the advantages of both approaches, meeting the following criteria:
\begin{itemize}
    \item generality to represent arbitrary surfaces;
    \item differentiability at the zero-level set; and 
    \item significant variation in implicit function values with increasing distance from the target surface, particularly near the zero-level set. 
\end{itemize}

To achieve this goal, we propose the scaled-squared distance function (S$^2$DF), as illustrated in Fig. \ref{fig:defferent implicit representations}. In what follows, we first introduce the definition of S$^2$DF in Sec. \ref{subsec:definition_S2DF}, followed by its theoretical properties in Sec. \ref{subsec:S2DF_property}. Built upon the properties, we finally propose a learning framework to reconstruct surfaces from 3D point clouds \textit{without} requiring ground-truth S$^2$DF as supervision in Sec. \ref{subsec:surface_pc}.

\subsection{Definition of Scaled-Squared Distance Function}
\label{subsec:definition_S2DF}
%To achieve the above-mentioned goal, 
We initially introduce the squared distance function to implicitly represent a surface  $\mathcal{S}$, defined as
\begin{equation}
  \mathcal{S} = \{\boldsymbol{x} \in \mathbb{R}^3 \mid t(\boldsymbol{x}) = 0, t(\boldsymbol{x}) = g(\boldsymbol{x})^2 \},
  \label{eq:SSDF1}
\end{equation}
where $g(\boldsymbol{x})$ is the distance function, representing the shortest distance from point $\boldsymbol{x}$ to the surface $\mathcal{S}$. It is evident that like UDF, the above-defined squared distance function does not distinguish between inside and outside, enabling the representation of arbitrary surfaces while maintaining differentiability at the zero-level set.

Moreover, given that points in close proximity to the zero-level set have very small values and their values given that points in close proximity to the zero-level set have very small values and their values given that points in close proximity to the zero-level set have very small values and their values given that points in close proximity to the zero-level set have very small values  
exhibit minimal variations as the distance from the target 
surface alters, distinguishing these points effectively poses a challenge for neural networks during learning.
To address this issue, we adjust the distance function outlined in Eq. \eqref{eq:SSDF1}
by scaling 
it with a constant factor of $K$.  
As illustrated in the inset, 
\setlength{\columnsep}{0.05in}
\begin{wrapfigure}[7]{r}{0.46\linewidth}
  \vspace{-3.8mm}
  \includegraphics[scale = 0.21]{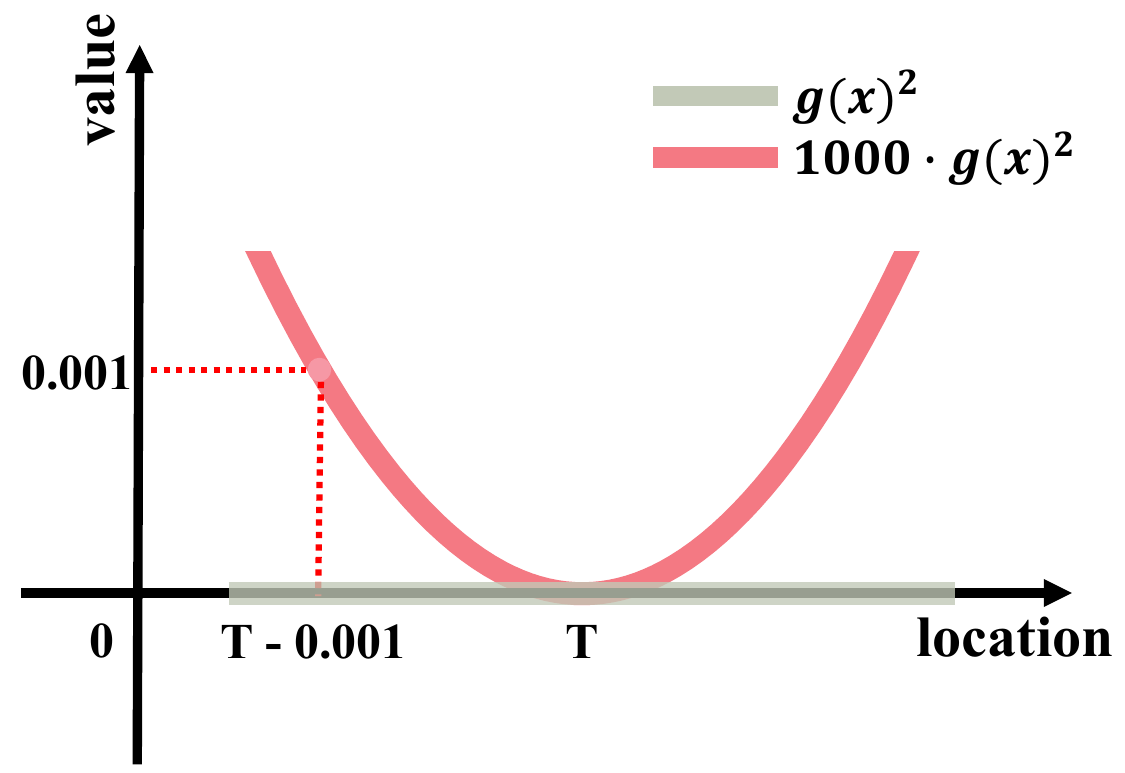}
\end{wrapfigure}multiplying by $K=10^3$ significantly amplifies the difference in implicit function values for any two points near the zero-level set with different distances from it, which can greatly reduce the difficulty of accurately learning the zero-level set for neural networks.
Consequently, the proposed S$^2$DF for implicitly representing an arbitrary 
\begin{equation}
  \mathcal{S} = \{\boldsymbol{x} \in \mathbb{R}^3 \mid t(\boldsymbol{x}) = 0, t(\boldsymbol{x}) = K \cdot g(\boldsymbol{x})^2 \},
\end{equation}
where $K$ is empirically set to $10^3$ in our 3D surface reconstruction experiments. We also refer readers to Sec. \ref{subsec:ablation} for the ablation study of the effect of $K$ on reconstruction accuracy.

\subsection{Theoretical Properties of S$^2$DF}
\label{subsec:S2DF_property}

As a general differentiable implicit surface representation, S\textsuperscript{2}DF exhibits some favorable mathematical properties, demonstrated as follows. 

\begin{theorem}
 Given a surface, if its S\textsuperscript{2}DF is differentiable at point $\boldsymbol{x}$, the Hessian at $\boldsymbol{x}$ will have an eigenvalue of $2K$. If $\boldsymbol{x}$ lies on the zero-level set, the corresponding eigenvector is the surface normal at that point; otherwise, the corresponding eigenvector aligns with the gradient of S\textsuperscript{2}DF at $\boldsymbol{x}$.
 \label{theorem for hessian}
\label{theorem1}
\end{theorem}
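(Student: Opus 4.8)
The plan is to prove the two cases of the statement separately, in both relying on the Eikonal identity $|\nabla g| = 1$ that the unsigned distance function $g$ satisfies wherever it is differentiable, together with the elementary chain-rule expression for the derivatives of $t = K g^2$.

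\textbf{Off the zero-level set.} First I would treat a point $\boldsymbol{x}$ with $g(\boldsymbol{x}) > 0$, where $g$ is itself (twice) differentiable. Differentiating $\nabla g \cdot \nabla g = 1$ once more gives $(\nabla^2 g)\,\nabla g = \boldsymbol{0}$, so $\nabla g$ is a null eigenvector of the Hessian of $g$. From $t = Kg^2$ one gets $\nabla t = 2Kg\,\nabla g$ and $\nabla^2 t = 2K\big(\nabla g\,\nabla g^{\top} + g\,\nabla^2 g\big)$; applying the latter to $\nabla g$ and using $|\nabla g|^2 = 1$ together with $(\nabla^2 g)\,\nabla g = \boldsymbol{0}$ yields $(\nabla^2 t)\,\nabla g = 2K\,\nabla g$. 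Hence $2K$ is an eigenvalue with eigenvector $\nabla g$, and since $\nabla t = 2Kg\,\nabla g$ with $g > 0$ this eigenvector is parallel to $\nabla t(\boldsymbol{x})$, as claimed.

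\textbf{On the zero-level set.} At a point $\boldsymbol{p} \in \mathcal{S}$ the function $g$ has a kink and is not differentiable, so I would instead introduce, on a neighborhood of a smooth point $\boldsymbol{p}$ lying off the medial axis, a local signed distance function $s$ with $g = |s|$; then $t = K s^2$ is smooth there, $s(\boldsymbol{p}) = 0$ and $\nabla s(\boldsymbol{p}) = \boldsymbol{n}$, the unit surface normal. With $\nabla^2 t = 2K(\nabla s\,\nabla s^{\top} + s\,\nabla^2 s)$, the second term vanishes at $\boldsymbol{p}$, leaving $\nabla^2 t(\boldsymbol{p}) = 2K\,\boldsymbol{n}\boldsymbol{n}^{\top}$, a rank-one matrix with eigenvalue $2K$ along $\boldsymbol{n}$ and $0$ on the tangent plane — exactly the assertion. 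As a check not invoking $s$: $t \ge 0$ with $t(\boldsymbol{p}) = 0$ forces $\nabla t(\boldsymbol{p}) = \boldsymbol{0}$ and the Hessian at $\boldsymbol{p}$ to be positive semidefinite; a second-order Taylor expansion along tangent directions, where $t \equiv 0$, and along $\boldsymbol{n}$, where $g$ grows linearly so $t$ grows like $K\varepsilon^2$, recovers the same eigenstructure.

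The differentiations are routine; the one step I would be careful about is the zero-level-set case, since $g$ itself is non-differentiable there and one must justify replacing $g^2$ locally by the square of a smooth signed distance. This is legitimate precisely at smooth surface points away from the medial axis, which is implicitly what the hypothesis ``the Hessian of $t$ exists at $\boldsymbol{x}$'' selects; I would make that reduction explicit at the start of the second case.
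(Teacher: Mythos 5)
Your proof is correct and rests on the same two pillars as the paper's: the Eikonal property of the distance function and the chain rule for $t = Kg^2$. The computations differ mildly in bookkeeping. Off the zero-level set, you compute the Hessian of $t$ explicitly and use $(\nabla^2 g)\nabla g = \boldsymbol{0}$ (obtained by differentiating $|\nabla g|^2 = 1$), whereas the paper differentiates the scalar identity $\|\nabla t\|^2 = 4K\,t$ once more to land directly at $H_t\nabla t = 2K\nabla t$; the two routes are algebraically equivalent. On the zero-level set, the paper first treats the watertight case via the SDF $h$ (so $H_t = 2K\nabla h\nabla h^\top$ at $h=0$) and then, for non-watertight surfaces, sets up the tubular-neighborhood parameterization $\boldsymbol{x}=\boldsymbol{r}(u,v)+d\boldsymbol{n}$ and pushes the Hessian through the frame $(\boldsymbol{p}_u,\boldsymbol{p}_v,\boldsymbol{p}_d)$. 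You instead introduce a local signed distance $s$ with $g=|s|$ near a smooth surface point off the medial axis; since $t=Ks^2$ is smooth there, $H_t(\boldsymbol{p})=2K\,\boldsymbol{n}\boldsymbol{n}^\top$ falls out immediately. Your $s$ is exactly the paper's transverse coordinate $d$, so the two formalizations encode the same geometric fact, but yours is leaner and treats the watertight and non-watertight cases uniformly; the paper's parameterization argument, while longer, makes the tangential eigenstructure (the zero eigenvalues along $\boldsymbol{p}_u,\boldsymbol{p}_v$) more visible. A small point in your favour: you explicitly flag the hypotheses (smooth surface point, off the medial axis) under which the local signed distance exists, which the paper leaves implicit in the phrase ``if its S\textsuperscript{2}DF is differentiable at $\boldsymbol{x}$.''
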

\begin{proof}
Consider a watertight surface, its S\textsuperscript{2}DF is written as  
\begin{equation}
t(\boldsymbol{x}) = K \cdot h(\boldsymbol{x})^2,
\label{start}
\end{equation}
where $h(\boldsymbol{x})$ denotes its SDF. According to the chain rule of differentiation, we derive
\begin{equation}
  \nabla t(\boldsymbol{x}) = 2Kh(\boldsymbol{x}) \cdot \nabla h(\boldsymbol{x}).
  \label{equation:first-order1}
\end{equation}
And then, as depicted in Fig. \ref{fig:differential properties}, based on  $\Vert \nabla h(\boldsymbol{x}) \Vert_2 = 1$, we have
\begin{equation}
   \Vert \nabla t(\boldsymbol{x}) \Vert_2^2 = 4K \cdot t(\boldsymbol{x}) \label{equation:first-order2}.
\end{equation}
Differentiating both sides of Eq. \eqref{equation:first-order2}, we further obtain
\begin{equation}
    H_t(\boldsymbol{x}) \nabla t(\boldsymbol{x}) = 2K \cdot \nabla t(\boldsymbol{x}),
    \label{end}
\end{equation}
where $H_t(\boldsymbol{x})$ indicates the Hessian of S\textsuperscript{2}DF. As the gradients of $t(\boldsymbol{x})$ are not equal to zero vector at the non-zero-level set, the Hessian of S\textsuperscript{2}DF always possesses \revise{one eigenvalue equal to $2K$} at the non-zero-level set, with corresponding eigenvector being gradient. 
Based on Eq. \eqref{equation:first-order1} and the chain rule of differentiation, we have
\begin{equation}
    H_t(\boldsymbol{x}) = 2K \cdot \nabla h(\boldsymbol{x}) \cdot \nabla h(\boldsymbol{x})^\top
    + 2Kh(\boldsymbol{x}) \cdot H_h(\boldsymbol{x}).
    \label{start2}
\end{equation}
Furthermore, for a watertight surface, at the zero-level set the Hessian of its S\textsuperscript{2}DF satisfies
\begin{equation}
    H_t(\boldsymbol{x}) = 2K \cdot \nabla h(\boldsymbol{x}) \cdot \nabla h(\boldsymbol{x})^\top.
\end{equation}
Since $\Vert \nabla h(\boldsymbol{x}) \Vert_2 = 1$, the following equation holds:
\begin{equation}
    H_t(\boldsymbol{x}) \cdot \nabla h(\boldsymbol{x}) = 2K \cdot \nabla h(\boldsymbol{x}).
    \label{end2}
\end{equation}
Therefore, Theorem \ref{theorem1} is established for watertight surfaces. 

For non-watertight surfaces, the S\textsuperscript{2}DF is equivalent to the scaled-squared UDF. To complete the proof for points on non-zero-level sets, we replace the SDF with UDF in Eqs. \eqref{start} to \eqref{end}. However, due to the non-differentiability of UDF at the zero-level set, we cannot derive the properties of the S\textsuperscript{2}DF on the zero-level set for non-watertight surfaces as we did in Eqs. \eqref{start2} to \eqref{end2}. To address this, for a non-watertight surface, we assume it is parameterized as $\boldsymbol{r}(u,v)$, and based on this, construct a parameterization of space as follows:
\begin{equation}
   \boldsymbol{x} = \boldsymbol{p}(u,v,d) = \boldsymbol{r}(u,v) + d\boldsymbol{n},
\end{equation}
where $\boldsymbol{n} = \frac{\boldsymbol{r}_u \times \boldsymbol{r}_v}{\Vert \boldsymbol{r}_u \times \boldsymbol{r}_v \Vert_2}$ is the unit normal vector of the surface.
The frame under this parameterization is derived as follows:
\begin{equation}
    \boldsymbol{p}_u = \frac{\partial \boldsymbol{p}}{\partial u} = \boldsymbol{r}_u + t\boldsymbol{n}_u, \  \boldsymbol{p}_v = \frac{\partial \boldsymbol{p}}{\partial v} = \boldsymbol{r}_v + t\boldsymbol{n}_v, \  \boldsymbol{p}_d = \frac{\partial \boldsymbol{p}}{\partial d} = \boldsymbol{n}.
\end{equation}
The S\textsuperscript{2}DF is given by:
\begin{equation}
    t(\boldsymbol{x}) = t(\boldsymbol{p}(u,v,d)) = s(u,v,d) = Kd^2.
\end{equation}
The gradient of the function $t(\boldsymbol{x})$ satisfies
\begin{equation}
\left\{
\begin{array}{l}
     \nabla t(\boldsymbol{x}) \cdot \boldsymbol{p}_u = \frac{\partial s(u,v,d)}{\partial u} = 0   \vspace{1ex} \\  \nabla t(\boldsymbol{x}) \cdot \boldsymbol{p}_v = \frac{\partial s(u,v,d)}{\partial v} = 0  \vspace{1ex} \\ 
     \nabla t(\boldsymbol{x}) \cdot \boldsymbol{p}_d = \frac{\partial s(u,v,d)}{\partial d} = 2Kd.
\end{array}
\right.
\end{equation}
Thus, based on $\boldsymbol{p}_u \cdot \boldsymbol{p}_d = 0$ , $\boldsymbol{p}_v \cdot \boldsymbol{p}_d = 0$ and $\boldsymbol{p}_d = \boldsymbol{n}$, we have 
\begin{align}
    \nabla t(\boldsymbol{x}) = 2Kd \cdot \boldsymbol{n},
\end{align}
Furthermore, the following equation holds:
\begin{align}
    H_t(\boldsymbol{x}) \cdot \boldsymbol{p}_d = \frac{\partial(\nabla t)}{\partial d} = 2K \cdot \boldsymbol{n}.
\end{align}
Since $\boldsymbol{p}_d = \boldsymbol{n}$, we obtain:
\begin{equation}
        H_t(\boldsymbol{x}) \cdot \boldsymbol{n} = 2K \cdot \boldsymbol{n}.
\end{equation}
Therefore, for non-watertight surfaces, the Hessian of the S\textsuperscript{2}DF also has an eigenvalue of $2K$ at the zero-level set, with the corresponding eigenvector being the surface normal.
\end{proof}

\begin{figure}[t]
    \centering
    \begin{overpic}[width=\linewidth]
    {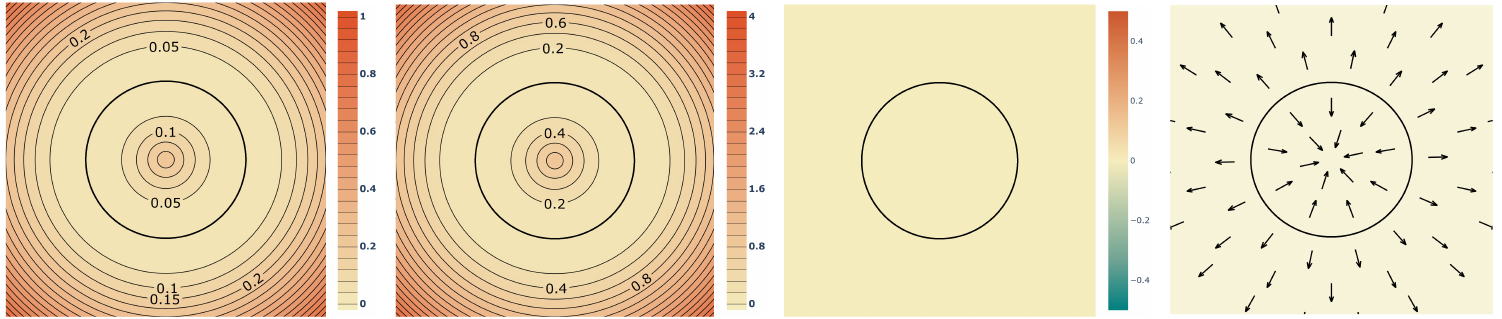}%grid,tics=5,
        \put(2, -3){
        \tiny{$t(\boldsymbol{x}) = g(x)^2$}}
        \put(31.4, -3){\tiny{$\Vert \nabla t(\boldsymbol{x}) \Vert_2^2$}}
        \put(52.1, -3){\tiny{$Det(H_t(\boldsymbol{x})-2I)$}}
        \put(77.0, -3){\tiny{$H_t(\boldsymbol{x}) \cdot \frac{\nabla t(\boldsymbol{x})}{\Vert \nabla t(\boldsymbol{x}) \Vert_2}$}}
    \end{overpic}
    \vspace{-3mm}
    \caption{Visualization of differential properties of S\textsuperscript{2}DF for a circle curve. For S\textsuperscript{2}DF with $K = 1$, the magnitude squared of its gradient is four times itself. Its Hessian always has an eigenvalue of 2 with the corresponding eigenvector being gradient.}
    \label{fig:differential properties}
\end{figure}

\begin{theorem}
 Given a surface $\mathcal{S}$, its S\textsuperscript{2}DF is a weak solution of the following Monge-Ampere partial differential equation:
 \begin{equation}
\left\{
\begin{array}{ll}
{\rm Det}({H}_t(\boldsymbol{x}) - 2K \cdot I) = 0 \\
t(\boldsymbol{x}) = 0,\;  \boldsymbol{x} \in \mathcal{S} \\
\nabla t(\boldsymbol{x}) = \boldsymbol{0},\;  \boldsymbol{x} \in \mathcal{S} \\
\end{array} 
\right.
\label{equation: MA equation}
\end{equation}
where ${\rm Det(\cdot)}$ denotes the determinant, ${H}_t(\boldsymbol{x})$ represents the Hessian of $t(\boldsymbol{x})$ and $I$ is the identity matrix.
\label{theorem for Monge-Ampere}
\end{theorem}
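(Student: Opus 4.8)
The plan is to derive the Monge-Ampere equation (\ref{equation: MA equation}) as an almost immediate consequence of Theorem \ref{theorem1}. The key observation is that Theorem \ref{theorem1} already establishes that, at every point $\boldsymbol{x}$ where the S\textsuperscript{2}DF $t$ is differentiable, the Hessian $H_t(\boldsymbol{x})$ has $2K$ as an eigenvalue. Having $2K$ as an eigenvalue is exactly the statement that $H_t(\boldsymbol{x}) - 2K\cdot I$ is a singular matrix, hence ${\rm Det}(H_t(\boldsymbol{x}) - 2K\cdot I) = 0$. So the PDE itself holds pointwise on the whole differentiable region, and the boundary conditions $t = 0$ and $\nabla t = \boldsymbol{0}$ on $\mathcal{S}$ follow directly from the definition $t(\boldsymbol{x}) = K\cdot g(\boldsymbol{x})^2$: on $\mathcal{S}$ we have $g = 0$, so $t = 0$, and since $g$ attains its minimum value $0$ on $\mathcal{S}$, the gradient of $g^2$ vanishes there (indeed $\nabla t = 2K g\,\nabla g$ wherever $g$ is differentiable, and the product vanishes as $g \to 0$; on the zero-level set this can also be read off from the parameterization argument in the proof of Theorem \ref{theorem1}, which gives $\nabla t = 2Kd\cdot\boldsymbol{n}$, equal to $\boldsymbol{0}$ when $d = 0$).

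The step that requires genuine care, rather than routine bookkeeping, is the qualifier ``weak solution.'' The S\textsuperscript{2}DF is not everywhere twice differentiable: for a non-watertight surface, the underlying UDF $g$ is not even $C^1$ on $\mathcal{S}$, and squaring it only partially heals this (it becomes $C^1$ but the second derivatives may still jump across $\mathcal{S}$ and across the cut locus / medial axis). Thus $H_t$ need not exist classically everywhere, and one must interpret (\ref{equation: MA equation}) in an appropriate generalized sense — presumably in the viscosity sense, mirroring how the SDF is a viscosity solution of the Eikonal equation, which is the analogy the introduction draws. The plan here is to invoke the Monge-Ampere-type theory (e.g., the reference \cite{trudinger2008monge}) to say that $t$, being continuous and satisfying the equation classically on the open dense set of differentiability with the correct boundary data, is a weak (viscosity) solution. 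I would keep this part brief: state that on the full-measure set where $t$ is twice differentiable the determinant identity holds by Theorem \ref{theorem1}, note that the singular set (cut locus plus the surface itself) has measure zero, and conclude that $t$ satisfies the PDE weakly together with the stated boundary conditions.

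Concretely, the proof I would write is: (i) fix any $\boldsymbol{x}$ at which $t$ is differentiable; by Theorem \ref{theorem1}, $2K$ is an eigenvalue of $H_t(\boldsymbol{x})$, so $H_t(\boldsymbol{x}) - 2K\cdot I$ has nontrivial kernel (spanned by $\nabla t(\boldsymbol{x})$ off the zero-level set, by the surface normal on it), whence ${\rm Det}(H_t(\boldsymbol{x}) - 2K\cdot I) = 0$; (ii) for $\boldsymbol{x}\in\mathcal{S}$, $g(\boldsymbol{x}) = 0$ gives $t(\boldsymbol{x}) = 0$, and $\nabla t(\boldsymbol{x}) = 2K g(\boldsymbol{x})\nabla g(\boldsymbol{x}) = \boldsymbol{0}$ (or, via the surface parameterization, $\nabla t = 2Kd\,\boldsymbol{n}|_{d=0} = \boldsymbol{0}$), so both boundary conditions are met; (iii) since the set where $t$ fails to be twice differentiable has Lebesgue measure zero, the determinant equation holds almost everywhere, and hence $t$ is a weak solution of (\ref{equation: MA equation}). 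The main obstacle, as noted, is being precise about the weak/viscosity formulation rather than the computation, which is entirely carried by the previous theorem.
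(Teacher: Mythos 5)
Your proposal is correct and takes essentially the same approach as the paper: the determinant identity follows immediately from Theorem~\ref{theorem1} because $2K$ being an eigenvalue of $H_t(\boldsymbol{x})$ forces $H_t(\boldsymbol{x}) - 2K\cdot I$ to be singular, and the boundary conditions $t=0$, $\nabla t = \boldsymbol{0}$ on $\mathcal{S}$ are read directly off the definition $t = K g^2$. Your additional care about the meaning of ``weak solution'' (measure-zero singular set, viscosity interpretation) is sensible but goes beyond what the paper itself formalizes---the paper simply states the weak-solution qualifier in the theorem and leaves the proof at the classical pointwise computation.
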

\begin{proof}
According to Theorem \ref{theorem for hessian}, if S\textsuperscript{2}DF is differentiable at a point $\boldsymbol{x}$, the Hessian of S\textsuperscript{2}DF at $\boldsymbol{x}$ must have \revise{an eigenvalue of $2K$}. This means that $2K$ is a root of the characteristic polynomial of the Hessian, and thus the following equation holds: 
\begin{equation}
{\rm Det}({H}_t(\boldsymbol{x}) - 2K \cdot I) = 0.
\end{equation}
Furthermore, if $\boldsymbol{x}$ lies on the surface $\mathcal{S}$, it is evident that both S\textsuperscript{2}DF and its gradient vanish at that point.

\noindent\textbf{Remark 1.} This formulation assumes that $\mathcal{S}$ is a closed surface, allowing for the application of Dirichlet and Neumann boundary conditions. However, for open surfaces, the domain is not well-defined and cannot support boundary constraints. To address this issue, we follow the approach of DUDF \cite{fainstein2024dudf}, reformulating the problem as an initial value problem, where Dirichlet and Neumann conditions prescribe the initial values for $t(\boldsymbol{x})$ and its gradient.

\noindent\textbf{Remark 2.} Theorem \ref{theorem for Monge-Ampere} demonstrates that S\textsuperscript{2}DF is a solution to the Monge-Ampere equation; however, we have yet to theoretically proven whether S\textsuperscript{2}DF is the unique solution. Nevertheless, the method for learning S\textsuperscript{2}DF we develop based on this in Sec. \ref{subsec:surface_pc} has achieved significant success.
\end{proof}

\subsection{Learning S\textsuperscript{2}DF from 3D Point Clouds without Supervision}
\label{subsec:surface_pc}
Based on the theoretical properties of S\textsuperscript{2}DF demonstrated in the preceding subsection, we propose a learning pipeline to directly learn S\textsuperscript{2}DF from raw unoriented point clouds. Specifically, given an unoriented point cloud $\mathcal{P}$, we employ a neural network $f(\boldsymbol{x};\theta)$, parameterized by $\theta$, to approximate the S\textsuperscript{2}DF of the underlying surface $\mathcal{S}$ without relying on ground-truth S\textsuperscript{2}DF values as supervision. That is, $f(\boldsymbol{x};\theta)$ takes a 3D query location $\boldsymbol{x}$ as input and outputs the corresponding S\textsuperscript{2}DF value.

Architecturally, we utilize the sinusoidal representation network (SIREN) \cite{sitzmann2020implicit} to construct $f(\boldsymbol{x};\theta)$. SIREN is a fully connected neural network that uses the sine function as its activation function, excelling at modeling fine details and enabling the computation of first- and second-order derivatives of implicit functions. We optimize $f(\boldsymbol{x};\theta)$ using the following loss function: 
\begin{align}
L = & \lambda_{MA}L_{MA} +  \lambda_{Dirichlet}L_{Dirichlet} 
\notag
\\ &  + \lambda_{Neumann}L_{Neumann} 
\notag
\\ & + \lambda_{non-manifold}L_{non-manifold},
\label{total loss}
\end{align}
where $\lambda_{MA}$, $\lambda_{Dirichlet}$, $\lambda_{Neumann}$ and $\lambda_{non-manifold} > 0$ are hyper-parameters balancing different terms. The term $L_{MA}$, serving as our Monge-Ampere regularization, guides the network $f(\boldsymbol{x};\theta)$ towards a solution of the Monge-Ampere partial differential equation and is defined as follows:
\begin{equation}
L_{MA} = \int_{\mathcal{P} \cup \mathcal{Q}} \left| Det(H_f(\boldsymbol{x}) - 2K \cdot I) \right| d\boldsymbol{x},K=1000,
\end{equation}
where $\mathcal{P}$ represents the input point cloud, $\mathcal{Q}$ is a set of sample points from space, ${\rm Det(\cdot)}$ denotes the determinant, ${H}_f(\boldsymbol{x})$ is the Hessian of $f(\boldsymbol{x};\theta)$ and $I$ is the identity matrix. To avoid excessively large outputs from the network $f(\boldsymbol{x};\theta)$, we normalize all shapes and restrict the sampling of $\mathcal{Q}$ to a narrow region near the input point cloud. Details on the sampling method are provided in the experimental section. 
The terms $L_{Dirichlet}$ and $L_{Neumann}$, corresponding to the Dirichlet and Neumann boundary conditions, respectively, are defined as: 
\begin{equation}
L_{Dirichlet} = \int_{\mathcal{P}} \left| f(\boldsymbol{x};\theta) \right| d\boldsymbol{x}, \end{equation}
\begin{equation}
L_{Neumann} = \int_{\mathcal{P}} \Vert \nabla f(\boldsymbol{x};\theta) \Vert_2 d\boldsymbol{x}.
\end{equation}
The term $L_{non-manifold}$, inspired by SIREN \cite{sitzmann2020implicit}, penalizes off-surface points for being close to the zero,  defined as:
\begin{equation}
L_{non-manifold} = \int_{\mathcal{Q}} exp(-\alpha \left| f(\boldsymbol{x};\theta) \right|) d\boldsymbol{x},\alpha=500.
\end{equation}
Fig. \ref{fig:learned field} visualizes the results of our method on toy 2D shapes, demonstrating that it effectively learns high-quality S\textsuperscript{2}DF for both watertight and non-watertight shapes in an unsupervised fashion. 

\begin{figure}[t]
    \centering
    \begin{overpic}[width=\linewidth]{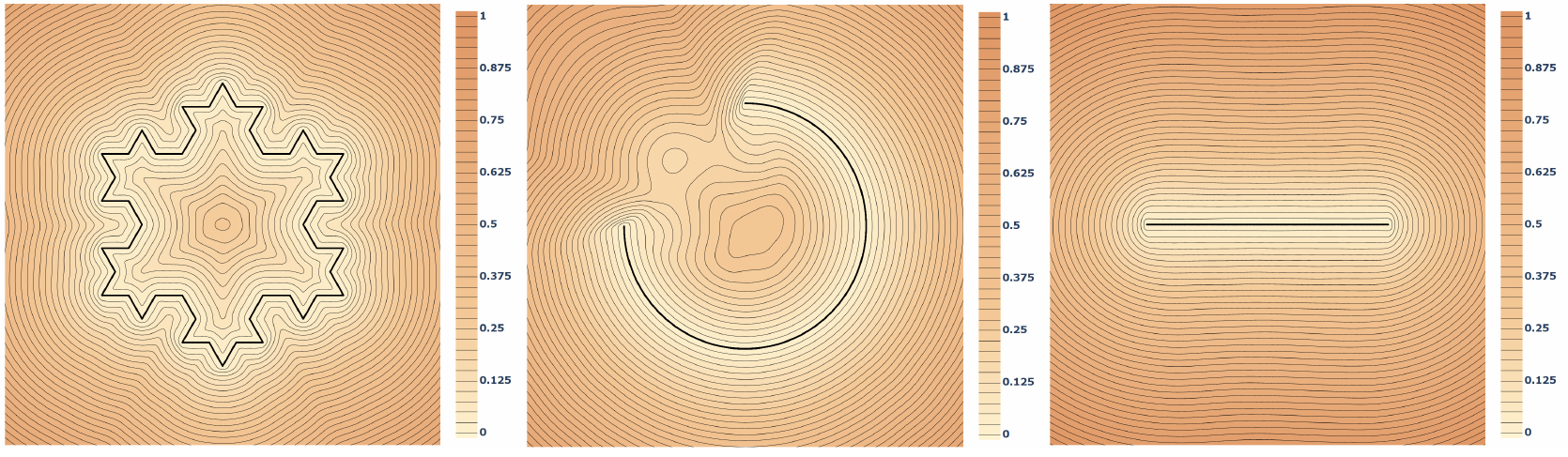}
        \put(9.2, -2){\scriptsize{Snowflake}}
        \put(41.5,-2){\scriptsize{Circular arc}}
        \put(78.2, -2){\scriptsize{Line}}
    \end{overpic}
    \caption{The visualization of the learned implicit functions using our method. Here, we divide the learned  S\textsuperscript{2}DF by K and then take the square root to convert them to UDF. It can be seen that our method can model non-watertight shapes and learn high-quality S\textsuperscript{2}DF.}
    \label{fig:learned field}
\end{figure}
\section{Experiments}
\label{experiments}
We evaluate our method on the task of surface reconstruction from point clouds. First, we provide implementation details of our approach. Next, we utilize multiple datasets, including both non-watertight and watertight shapes, to demonstrate that our method can model shapes of arbitrary types and recover high-fidelity geometric details. We conduct ablation studies on the Stanford 3D Scanning Repository to validate the effectiveness of each design in our method. Finally, we compare runtime performance with existing state-of-the-art methods.

\subsection{Experimental Setup}
\textbf{Implementation Details.}
We conduct all experiments on an NVIDIA RTX 3090 GPU with 24GB video memory and an Intel(R) Xeon(R) CPU. The network architecture which is the same as SIREN \cite{sitzmann2020implicit} and consists of 5 hidden layers with 256 units each, is employed to learn S\textsuperscript{2}DF for all datasets. 
\revise{We adopt the same initialization strategy as SIREN. Although our output differs from SIREN's formulation, its proposed initialization method has been validated as effective for various tasks, such as solving the Poisson equation, learning SDF, and fitting 2D images. Our experimental results demonstrate that the initialization strategy is also effective for learning S\textsuperscript{2}DF.} The $K$ is set to 1000 and the loss weights are configured as follows: ($\lambda_{Dirichlet}$, $\lambda_{Neumann}$, $\lambda_{MA}$, $\lambda_{non-manifold}$) = ($10^8$, $8 \times 10^6$, $8.5 \times 10^{-3}$, $10^6$) for non-watertight shapes and ($10^8$, $8 \times 10^6$, $6 \times 10^{-3}$, $10^6$) for watertight shapes. Here, due to the large value of $K$ and the relatively small values of the elements of the network's Hessian at initialization, the absolute values of the main diagonal elements of $(H_f(\boldsymbol{x};\theta) - 2KI)$ are close to $2K$. This means that Monge-Ampere loss and its gradient are very large, so the loss weight of Monge-Ampere is set significantly smaller than the weights of other losses for normalization.

The Adam optimizer \cite{kingma2014adam} is used with the initial learning rate of $3 \times 10^{-4}$ and training is run for 10k iterations. The learning rate decays by a factor of 0.18 at the 4500th, 6000th, 7000th, 8000th, and 9000th iterations. $\mathcal{Q}$ is sampled based on a set of Gaussian distributions, each centered on a point in the input point cloud with a standard deviation equal to 0.01. We sample one point for each distribution. The number of sampled points for $\mathcal{P}$ and $\mathcal{Q}$ at each iteration is set to 15k, which is the same as NSH \cite{wang2023neural} and StEik \cite{yang2024stabilizing}.

\begin{figure*}[t]
    \centering
    \begin{overpic}[width=\linewidth]{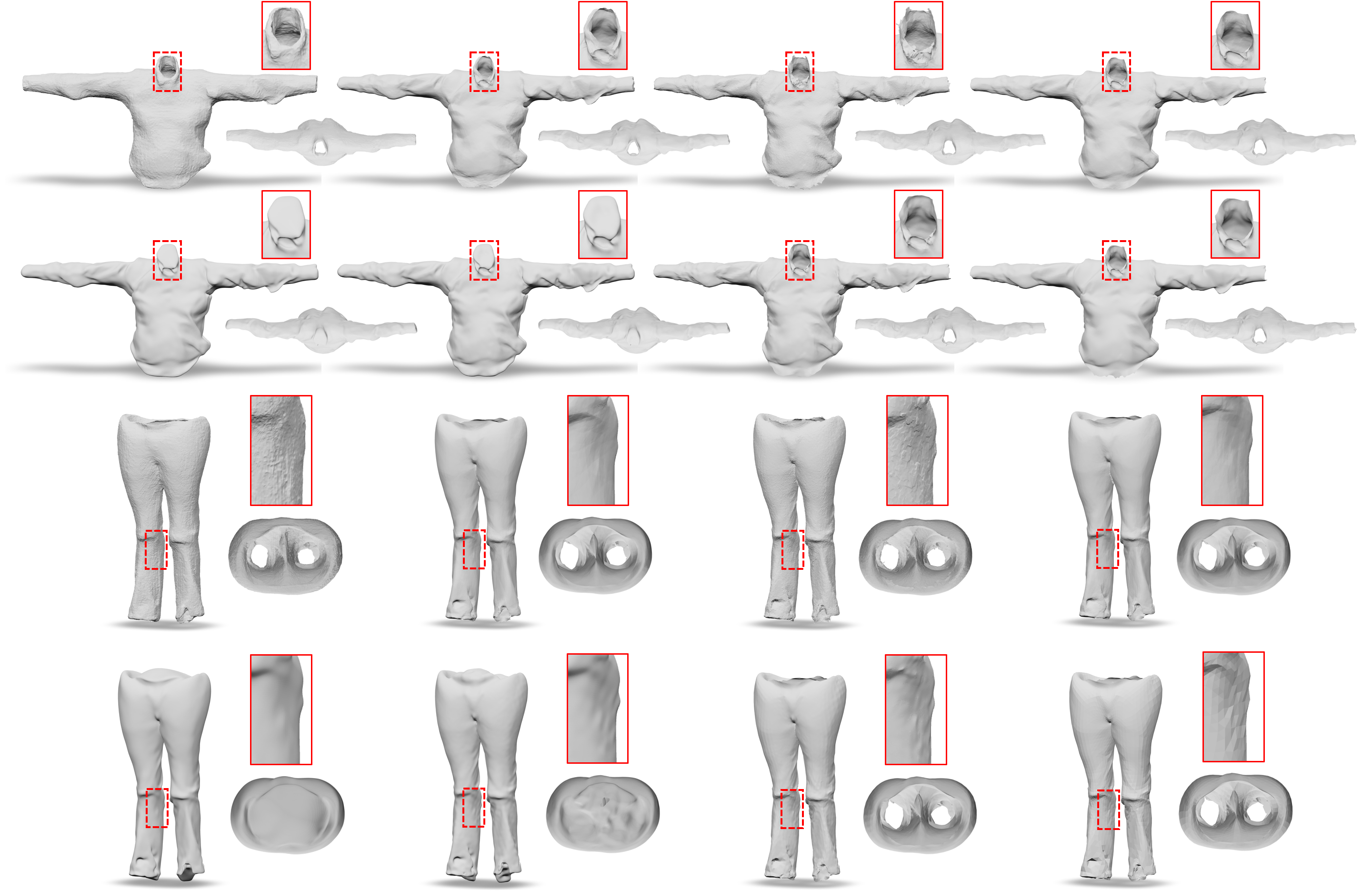}%grid,tics=2,
        \put(10.3, 49.5){\scriptsize{NDF} \cite{chibane2020neural}}
        \put(32, 49.5){\scriptsize{CAP-UDF} \cite{zhou2024cap}}
        \put(55.7,49.5){\scriptsize{GeoUDF} \cite{ren2023geoudf}}
        \put(76.8,49.5){\scriptsize{LevelSetUDF} \cite{zhou2023learning}}
        \put(10.3,36){\scriptsize{NSH} \cite{wang2023neural}}
        \put(33.3,36){\scriptsize{StEik} \cite{yang2024stabilizing}}
        \put(56.75, 36){\scriptsize{Ours}}
        \put(80.6, 36){\scriptsize{GT}}   
        \put(10.3,17.5){\scriptsize{NDF} \cite{chibane2020neural}}
        \put(32.0, 17.5){\scriptsize{CAP-UDF} \cite{zhou2024cap}}
        \put(56.0,17.5){\scriptsize{GeoUDF} \cite{ren2023geoudf}}
        \put(77.0,17.5){\scriptsize{LevelSetUDF} \cite{zhou2023learning}}
        \put(10.3, -1.3){\scriptsize{NSH} \cite{wang2023neural}}
        \put(33.5,-1.3){\scriptsize{StEik} \cite{yang2024stabilizing}}
        \put(57.0, -1.3){\scriptsize{Ours}}
        \put(80.6, -1.3){\scriptsize{GT}}
    \end{overpic}
    \caption{Visual comparison of surface reconstruction on MGN dataset \cite{bhatnagar2019multi}.}
    \label{fig:MGN}
\end{figure*}

\vspace{1em}
\noindent\textbf{Surface Extraction.}
 We leverage DoubleCoverUDF \cite{hou2023robust} to extract the zero-level set from  S\textsuperscript{2}DF with a grid resolution of $256^3$ and an offset of $5\times10^{-3}$. The number of iterations for DoubleCoverUDF is set to 225 for the first optimization stage and 75 for the second. To ensure a fair comparison, we also apply DoubleCoverUDF, which is able to more accurately capture the correct topological features and decrease the number of non-manifold vertices compared to other methods \cite{hou2023robust}, with the same settings to extract the zero-level set for other UDF-based methods. For SDF-based methods, we generate meshes using the Marching Cubes algorithm \cite{lorensen1998marching} with the same resolution as DoubleCoverUDF.
 \begin{table}[t]
  \caption{Quantitative comparison of surface reconstruction on MGN dataset \cite{bhatnagar2019multi}. Note that the methods marked with \textsuperscript{\dag} are supervised, which require ground-truth distance values for training.}
  \label{table:MGN}
  \centering{
  \resizebox{\linewidth}{!}{
  \begin{tabular}{l|cccccc}
    \toprule
     \multirow{2}{*}{Method} & \multicolumn{2}{c}{CD $\downarrow$} &\multicolumn{2}{c}{NC  $\uparrow$}  &\multicolumn{2}{c}{F-Score $\uparrow$}\\
    &mean&median&mean&median&mean&median\\
        \midrule
    NDF\textsuperscript{\dag} \cite{chibane2020neural}&5.980&6.124&95.76&96.09&79.71&78.15\\
    GIFS\textsuperscript{\dag} \cite{ye2022gifs}&3.790&3.746&90.73&91.08&97.65&97.67\\
    CAP-UDF \cite{zhou2024cap}&2.887&2.839&97.99&97.95&98.87&98.98\\
    GeoUDF\textsuperscript{\dag} \cite{ren2023geoudf}&2.771&2.887&97.52&97.73&99.20&99.20\\
    LevelsetUDF \cite{zhou2023learning}&2.684&2.785&98.47&98.65&99.70&\textbf{99.74}\\
    \midrule
    NSH \cite{wang2023neural}&7.074&6.799&96.60&96.72&93.96&94.03\\
    StEik \cite{yang2024stabilizing}&6.761&6.915&96.08&96.17&93.84&93.86\\
    \midrule
    Ours &\textbf{2.657}&\textbf{2.759}&\textbf{98.94}&\textbf{99.06}&\textbf{99.95}&99.69\\
    \bottomrule
\end{tabular}}}
\end{table}
\vspace{1em}
\noindent\textbf{Evaluation Metrics.}
To compare the reconstruction quality, we sample 100k points from the reconstructed surfaces and use the Chamfer distance (CD), Normal consistency (NC), and F-Score as evaluation metrics. The reported CD uses L1-norm and is scaled by $10^3$ to measure the similarity between the two surfaces. The Normal consistency, expressed as a percentage, refers to the mean absolute cosine of normals in one mesh and normals at nearest neighbors in the other mesh. The F-Score, expressed as a percentage, indicates the harmonic mean of precision and recall, with a default threshold of 0.008.

\subsection{Surface Reconstruction of Non-watertight Shapes}
 Our comparison of the proposed method spans synthetic shapes including MGN \cite{bhatnagar2019multi} and ShapeNet \cite{chang2015shapenet}, as well as real scans from 3D Scene \cite{zhou2013dense} and Waymo \cite{sun2020scalability}. The baseline methods are grouped into three categories of state-of-the-art surface reconstruction techniques:
\begin{itemize}
    \item Supervised UDF Approaches: This category includes NDF \cite{chibane2020neural}, GIFS \cite{ye2022gifs} and GeoUDF \cite{ren2023geoudf}, which rely on ground-truth data to train a generalized model.
    \item Overfitting-based UDF Approaches: This category encompasses CAP-UDF \cite{zhou2024cap} and LevelSetUDF \cite{zhou2023learning}, where neural networks are trained individually for each shape.
    \item Overfitting-based SDF Approaches: This category features NSH \cite{wang2023neural} and StEik \cite{yang2024stabilizing}, which use Eikonal term and other contrains to learn SDF from raw unoriented point clouds.
\end{itemize}

\vspace{1em}
\noindent\textbf{MGN.} The MGN dataset \cite{bhatnagar2019multi} comprises a substantial number of open clothing models. We randomly select 20 shapes from this dataset to evaluate our method, using 20k sampled points for each model as input. The quantitative comparisons are reported in Tab. \ref{table:MGN}, while the visual comparisons are presented in Fig. \ref{fig:MGN}. It is evident that NSH \cite{wang2023neural} and StEik \cite{yang2024stabilizing}, both of which are SDF-based methods using Eikonal term, forcibly close the non-watertight clothing, resulting in significant reconstruction errors. NDF \cite{chibane2020neural} and GeoUDF \cite{ren2023geoudf} can handle open surfaces, but their reconstructed surfaces are rough. In contrast, our method recovers finer geometric details, yielding higher-quality reconstructions. 

\begin{figure*}[t]
    \centering
     \begin{overpic}[width=\textwidth]{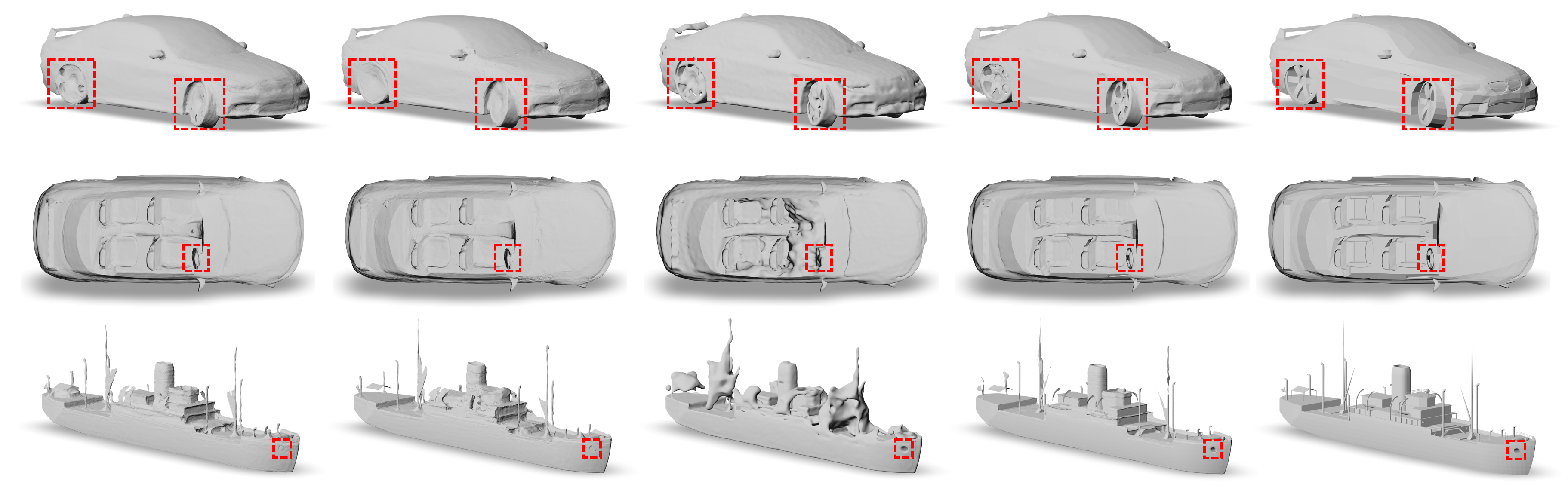}%grid,tics=2,
                \put(6, -1.5){\small{CAP-UDF} \cite{zhou2024cap}}
                \put(24.5, -1.5){\small{LevelSetUDF} \cite{zhou2023learning}}
                \put(47.8, -1.5){\small{StEik} \cite{yang2024stabilizing}}
                \put(68, -1.5){\small{Ours}}
                \put(87, -1.5){\small{GT}}
    \end{overpic}
    \caption{Visual comparison of surface reconstruction under ShapeNet dataset \cite{chang2015shapenet}.}
    \label{fig:shapenet}
\end{figure*}

\vspace{1em}

\noindent\textbf{ShapeNet.}
\label{section-shapenet}
 The ShapeNet dataset \cite{chang2015shapenet} features a diverse collection of 3D CAD models, which are non-watertight and exhibit complex internal and thin-layered structures. We evaluate our method against the baseline approaches across five categories: cars, lamps, chairs, sofas and watercraft, with each category containing 20 objects. For each object, we sample 100k points as input. The quantitative results in Tab. \ref{table:shapenet} and the visual comparisons in Fig. \ref{fig:shapenet} demonstrate our method outperforms the baseline approaches both quantitatively and qualitatively. StEik \cite{yang2024stabilizing}, which is based on SDF, fails to effectively deal with these complex objects. CAP-UDF \cite{zhou2024cap} and LevelSetUDF \cite{zhou2023learning} struggle to recover intricate geometric structures such as the tires and steering wheels of cars. In contrast, our method effectively reconstructs these complex objects, yielding more accurate topologies.

\begin{table}[t]
  \caption{Quantitative comparison of surface reconstruction on ShapeNet dataset \cite{chang2015shapenet}. Note that the methods marked with \textsuperscript{\dag} are supervised, which require ground-truth distance values for training.}
  \label{table:shapenet}
  \centering{
  \resizebox{\linewidth}{!}{
  \begin{tabular}{l|cccccc}
    \toprule
    \multirow{2}{*}{Method} & \multicolumn{2}{c}{CD $\downarrow$} &\multicolumn{2}{c}{NC  $\uparrow$}  &\multicolumn{2}{c}{F-Score $\uparrow$}\\
    &mean&median&mean&median&mean&median\\
    \midrule
    NDF\textsuperscript{\dag} \cite{chibane2020neural}&8.631&8.588&90.96&91.78&49.69&48.57\\
    GIFS\textsuperscript{\dag} \cite{ye2022gifs}&5.242&5.156&88.55&88.59&86.28&90.30\\
    CAP-UDF \cite{zhou2024cap}&4.345&4.359&94.20&95.27&89.48&93.29\\
    LevelsetUDF \cite{zhou2023learning}&4.602&4.327&94.10&95.40&88.78&92.40\\
    \midrule
    NSH \cite{wang2023neural}&7.834&5.384&92.45&93.96&83.706&87.54\\
    StEik \cite{yang2024stabilizing}&7.578&6.058&92.47&92.72&83.14&86.01\\
    \midrule
    Ours &\textbf{4.051}&\textbf{4.103}&\textbf{95.58}&\textbf{96.62}&\textbf{91.67}&\textbf{94.92}\\
    \bottomrule
\end{tabular}}}
\end{table}

\begin{table*}[t]
  \caption{Quantitative comparison of surface reconstruction on 3D Scene dataset \cite{zhou2013dense}. Note that the methods marked with \textsuperscript{\dag} are supervised, which require ground-truth distance values for training.}
  \label{table:3D Scene}
  \resizebox{\linewidth}{!}{
  \begin{tabular}{l|cccccc|cccccc}
    \toprule
    \multirow{3}{*}{Method} &\multicolumn{6}{c|}{500/$m^2$}&\multicolumn{6}{c}{1000/$m^2$}\\
    \cmidrule{2-13}
    & \multicolumn{2}{c}{CD $\downarrow$} &\multicolumn{2}{c}{Normal $\uparrow$}  &\multicolumn{2}{c|}{F-Score $\uparrow$} &  \multicolumn{2}{c}{CD $\downarrow$} &\multicolumn{2}{c}{Normal $\uparrow$}  &\multicolumn{2}{c}{F-Score $\uparrow$}\\
    &mean&median&mean&median&mean&median&mean&median&mean&median&mean&median\\
    \midrule
    NDF\textsuperscript{\dag} \cite{chibane2020neural}&7.863&7.971&90.35&90.58&60.53&61.79&9.526&9.614&89.67&90.05&33.80&34.22\\
    GIFS\textsuperscript{\dag} \cite{ye2022gifs}&4.871&4.784&86.97&86.95&91.25&92.96&5.056&4.971&86.15&87.02&89.98&91.18\\
    CAP-UDF \cite{zhou2024cap}&3.690&3.575&91.88&90.13&95.66&96.95&3.769&3.803&91.95&90.99&94.99&95.19\\
    LevelsetUDF \cite{zhou2023learning}&3.746&3.689&92.13&91.11&95.03&96.02&3.681&3.508&92.16&90.61&95.12&\textbf{97.17}\\
    \midrule
    NSH \cite{wang2023neural}&27.23&16.41&84.06&82.45&66.17&66.71&26.49&23.17&83.98&82.70&65.51&65.56\\
    StEik \cite{yang2024stabilizing}&21.25&18.19&84.81&84.84&69.08&68.25&21.53&18.73&84.85&84.85&70.08&70.39\\
    \midrule
    Ours &\textbf{3.624}&\textbf{3.515}&\textbf{93.33}&\textbf{92.87}&\textbf{95.92}&\textbf{97.05}&\textbf{3.570}&\textbf{3.487}&\textbf{93.63}&\textbf{93.41}&\textbf{95.91}&97.15\\
    \bottomrule
\end{tabular}}
\end{table*}
\begin{figure*}[t]
    \centering
    \begin{overpic}[width=\textwidth]{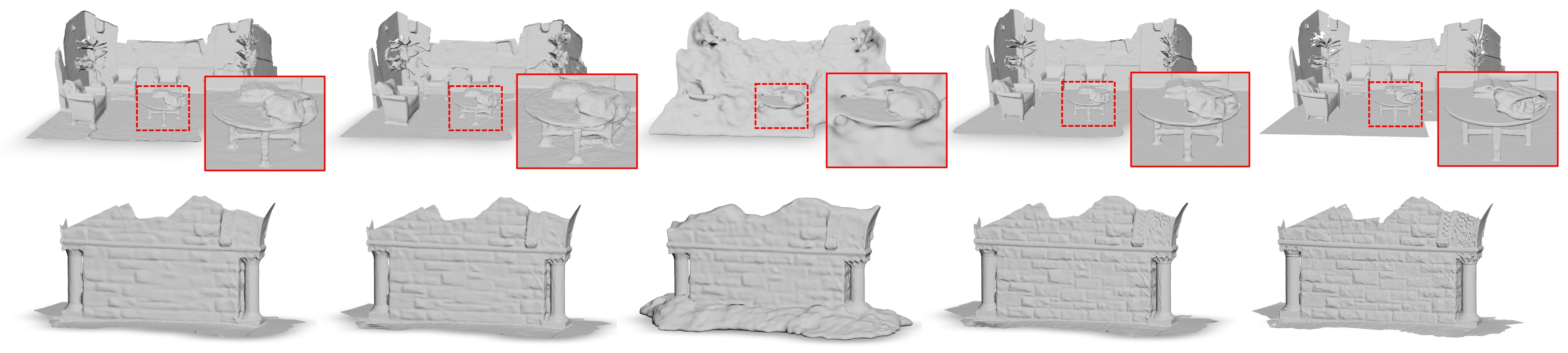}%grid,tics=2,
               \put(6.65, -1.35){\small{CAP-UDF} \cite{zhou2024cap}}
                \put(25.0, -1.35){\small{LevelSetUDF} \cite{zhou2023learning}}
                \put(47, -1.35){\small{StEik} \cite{yang2024stabilizing}}
                \put(66.7, -1.35){\small{Ours}}
                \put(86.5, -1.35){\small{GT}}
    \end{overpic}
    \caption{Visual comparison of surface reconstruction under 3D Scene dataset \cite{zhou2013dense}. Input contains 1K points / $m^2$.}
    \label{fig:3dsence}
\end{figure*}

\vspace{1em}
\noindent\textbf{3D Scene.}
We evaluate our method on real scans to further demonstrate its advantages. Following LevelSetUDF \cite{zhou2023learning}, we conduct experiments on the 3D Scene dataset \cite{zhou2013dense}, which comprises real-world scenes with complex topologies and noises. For each scene, we sample 500 and 1000 points per square meter. 
As shown in Tab. \ref{table:3D Scene} and Fig. \ref{fig:3dsence}, our method exhibits clear advantages, achieving superior reconstruction results across both point cloud densities. It effectively adapts to various models and preserves intricate geometric details. In contrast, the supervised methods NDF \cite{chibane2020neural} and GIFS \cite{ye2022gifs} struggle to generalize to this unseen data, while CAP-UDF \cite{zhou2024cap} and LevelSetUDF \cite{zhou2023learning} produce overly smooth surfaces, lacking fine geometric details such as stone patterns on walls. StEik \cite{yang2024stabilizing} also encounters challenges with real-world scene data.

\vspace{1em}
\noindent\textbf{Waymo.}
To comprehensively demonstrate the capabilities of our method, we also use LiDAR scans of the Waymo dataset \cite{sun2020scalability} to conduct experiments on extremely large scenes. We leverage the slide-window strategy to divide the full scene into multiple blocks and reconstruct each one independently. Then we combine the reconstructed local scenes to obtain the complete scene. As illustrated in Fig. \ref{fig:waymo}, our method generates visually more appealing results compared to existing state-of-the-art approaches, even when processing LiDAR scans with various artifacts, such as significant noise, non-uniform distribution, and missing points.

\begin{figure*}[ht]
    \centering
    \begin{overpic}[width=\linewidth]{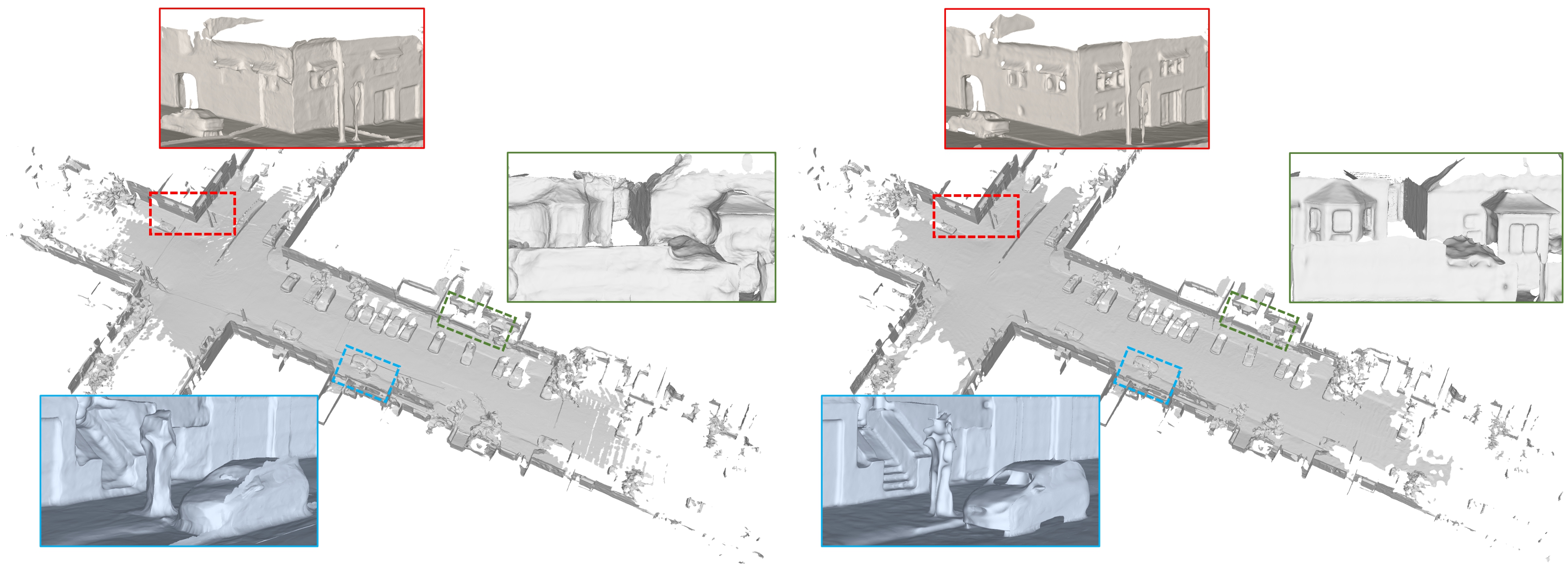}%grid,tics=2,
        \put(20.0, -1.35){\small{LevelSetUDF} \cite{zhou2023learning}}
        \put(73.2, -1.35){\small{Ours}}
    \end{overpic}
    \caption{Visual comparison of surface reconstruction under Waymo dataset \cite{sun2020scalability}.}
    \label{fig:waymo}
\end{figure*}

\subsection{Surface Reconstruction of Watertight Shapes}
To fully show the effectiveness of our approach, we also conduct experiments on watertight objects for comparison with SDF-based methods. We utilize three datasets: ShapeNet \cite{chang2015shapenet}, Thingi10k \cite{zhou2016thingi10k}, and the Stanford 3D Scanning Repository to evaluate our method against baseline approaches. The baseline approaches include overfitting-based SDF approaches: IGR \cite{gropp2020implicit}, SIREN \cite{sitzmann2020implicit}, DiGS \cite{ben2022digs}, NSH \cite{wang2023neural}, StEik \cite{yang2024stabilizing} and PG-SDF \cite{koneputugodage2024small}, as well as overfitting-based UDF approaches: CAP-UDF \cite{zhou2024cap} and LevelSetUDF \cite{zhou2023learning}.

\vspace{1em}
\noindent\textbf{ShapeNet.}
To evaluate our method on watertight shapes, we first preprocess the ShapeNet dataset \cite{chang2015shapenet} using the DISN method \cite{xu2019disn} to remove internal structures and make shapes watertight. We maintain the same split as in the section \ref{section-shapenet}. Note that none of the methods is provided with normals. The quantitative comparison in Tab. \ref{table:ShapeNet processed by DISN} and the visual comparison in Fig. \ref{fig:shapnet process by DISN} demonstrate that our approach effectively reconstructs the lamp with numerous thin rods and achieves higher quality reconstruction results than other methods. \revise{Although PG-SDF \cite{koneputugodage2024small} also successfully reconstructs the lamp and off-road vehicle, the reconstructed surfaces are not smooth.}
We attribute our success to our Neumann loss $L_{Neumann}$. For SDF-based approaches using the Eikonal term, their Neumann loss requires the supervision of point cloud normals. While DiGS \cite{ben2022digs}, NSH \cite{wang2023neural} and StEik \cite{yang2024stabilizing} have explored alternative losses to replace this normal supervision, their performance falls short compared to methods that use normal supervision. In contrast, our $L_{Neumann}$ does not require normal supervision because the gradients of S\textsuperscript{2}DF at zero-level set are $\boldsymbol{0}$. 

\begin{figure*}[ht]
    \centering
    \begin{overpic}[width=\textwidth]{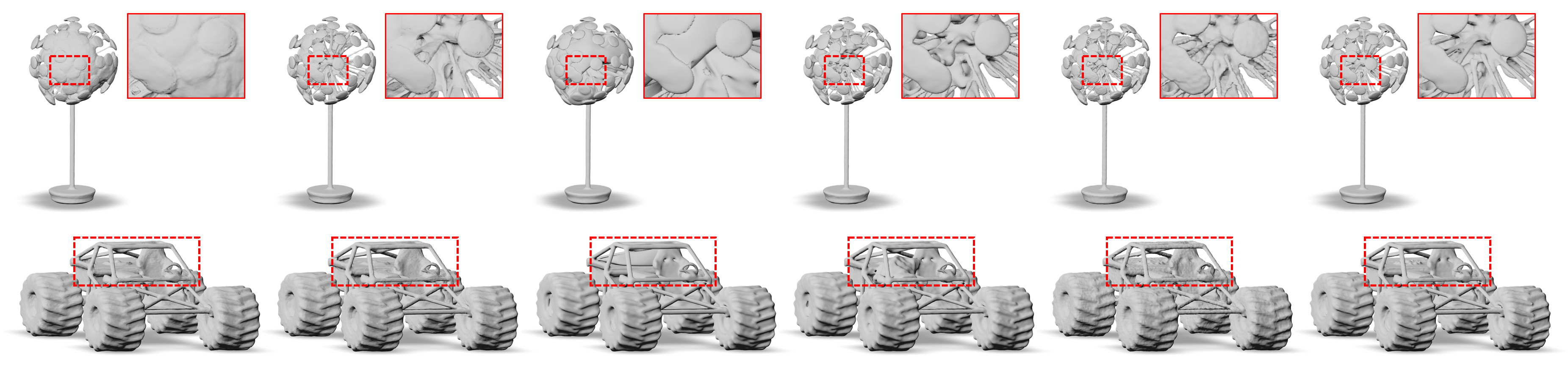}%grid,tics=2,
               \put(3.1, -1.3){\small{CAP-UDF} \cite{zhou2024cap}}
                \put(19.0,-1.3){\small{LevelSetUDF} \cite{zhou2023learning}}
                \put(39.0, -1.3){\small{NSH} \cite{wang2023neural}}
                \put(54.9, -1.3){\small{StEik} \cite{yang2024stabilizing}}
                \put(70.5, -1.3){\small{\revise{PG-SDF}} \cite{koneputugodage2024small}}
                \put(88.5, -1.3){\small{Ours}}
    \end{overpic}
    \caption{Visual comparison of surface reconstruction under ShapeNet dataset \cite{chang2015shapenet} processed by DISN\cite{xu2019disn}.}
    \label{fig:shapnet process by DISN}
\end{figure*}
\begin{figure*}[htbp]
    \centering
    \begin{overpic}[width=\textwidth]{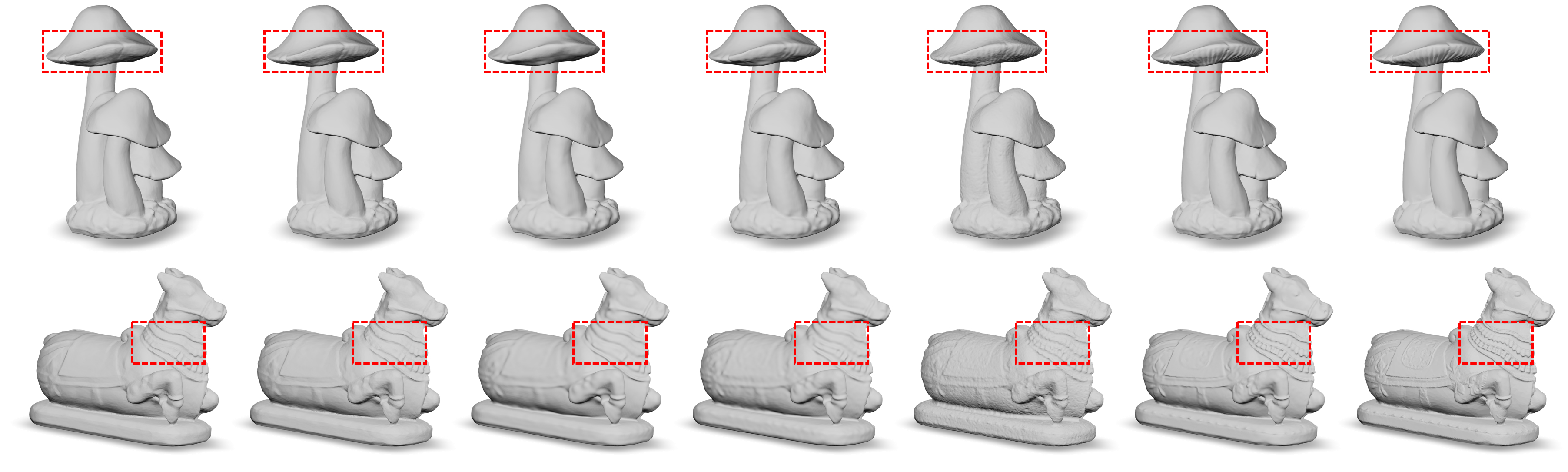}
               \put(2.8, -1){\small{CAP-UDF} \cite{zhou2024cap}}
                \put(16.5,-1){\small{LevelSetUDF} \cite{zhou2023learning}}
                \put(33.5, -1){\small{NSH} \cite{wang2023neural}}
                \put(46.7, -1){\small{StEik} \cite{yang2024stabilizing}}
                \put(60, -1){\small{\revise{PG-SDF}} \cite{koneputugodage2024small}}
                \put(76.0, -1){\small{Ours}}
                \put(90, -1){\small{GT}}
    \end{overpic}
    \caption{Visual comparison of surface reconstruction under Thingi10K dataset \cite{zhou2016thingi10k}.}
    \label{fig:Thingi10K}
\end{figure*}

\vspace{1em}
\noindent\textbf{Thingi10K and Stanford 3D Scanning Repository.}
The Thingi10K dataset \cite{zhou2016thingi10k} comprises a variety of models with intricate geometric details. We use its Thingi32 subset. Additionally, we select six shapes from the Stanford 3D Scanning Repository: Armadillo, Bunny, Dragon, Asian Dragon, Happy Buddha, and Thai Statue. Among these, the Bunny is not watertight. For each shape, we randomly sample 50k points as the input point cloud. The quantitative comparisons are reported in Tab. \ref{table:Thingi10K}. Furthermore, the visual comparisons in Fig. \ref{fig:Thingi10K} and Fig. \ref{fig:stanford} illustrate that our method captures more accurate and finer geometric details such as threads of objects. Especially, for the Bunny, all compared methods erroneously fill in the holes at the bottom. Only our method accurately reconstructs the character.

\begin{table}[ht]
  \caption{Quantitative comparison of surface reconstruction under ShapeNet dataset \cite{chang2015shapenet} processed by DISN \cite{xu2019disn}.}
  \label{table:ShapeNet processed by DISN}
  \centering{
  \resizebox{\linewidth}{!}{
  \begin{tabular}{l|cccccc}
    \toprule
    \multirow{2}{*}{Method} & \multicolumn{2}{c}{CD $\downarrow$} &\multicolumn{2}{c}{Normal $\uparrow$}  &\multicolumn{2}{c}{F-Score $\uparrow$}\\
    &mean&median&mean&median&mean&median\\
    \midrule
    CAP-UDF \cite{zhou2024cap}&4.218&3.690&96.77&97.85&93.68&97.34\\
    LevelsetUDF \cite{zhou2023learning}&3.650&3.569&97.34&98.10&94.71&98.21\\
     \midrule
    IGR \cite{gropp2020implicit}&16.19&6.699&91.45&93.60&77.47&83.85\\
    SIREN \cite{sitzmann2020implicit}&6.848&5.812&95.15&95.64&84.62&88.01\\
    DiGS \cite{ben2022digs}&4.260&3.973&96.94&97.94&93.52&96.55\\
    NSH \cite{wang2023neural}&4.616&4.008&96.98&98.18&92.64&96.05\\
    StEik \cite{yang2024stabilizing}&4.391&3.659&97.25&98.20&93.87&97.70\\
    \revise{PG-SDF \cite{koneputugodage2024small}} &\revise{4.147}&\revise{3.872}&\revise{96.75}&\revise{97.74}&\revise{93.31}&\revise{96.66} \\
     \midrule
    Ours &\textbf{3.525}&\textbf{3.497}&\textbf{97.79}&\textbf{98.42}&\textbf{95.34}&\textbf{98.45}\\
    \bottomrule
\end{tabular}}}
\end{table}
\begin{table*}[htbp]
  \caption{Quantitative comparisons of surface reconstruction on Thingi10K dataset \cite{zhou2016thingi10k} and the Stanford 3D Scanning Repository.}
  \label{table:Thingi10K}
  \centering{
  \resizebox{\linewidth}{!}{
  \begin{tabular}{l|cccccc|cccccc}
    \toprule
    \multirow{3}{*}{Method} &\multicolumn{6}{c|}{Thingi10K}&\multicolumn{6}{c}{Stanford 3D Scanning Repository}\\
    \cmidrule{2-13}
    & \multicolumn{2}{c}{CD $\downarrow$} &\multicolumn{2}{c}{Normal $\uparrow$}  &\multicolumn{2}{c|}{F-Score $\uparrow$} &  \multicolumn{2}{c}{CD $\downarrow$} &\multicolumn{2}{c}{Normal $\uparrow$}  &\multicolumn{2}{c}{F-Score $\uparrow$}\\    &mean&median&mean&median&mean&median&mean&median&mean&median&mean&median\\
    \midrule
    CAP-UDF \cite{zhou2024cap}&4.074&4.024&98.15&98.44&94.67&95.90&3.807&3.847&94.81&96.84&97.10&97.57\\
    LevelsetUDF \cite{zhou2023learning}&3.889&3.892&98.40&98.50&95.04&96.52&3.668&3.652&95.22&96.96&97.42&98.03\\
     \midrule
    IGR \cite{gropp2020implicit}&10.711&4.564&95.90&97.70&87.25&92.75&15.75&4.313&90.49&93.37&85.77&94.33\\
    SIREN \cite{sitzmann2020implicit}&10.198&11.398&94.91&94.44&77.91&73.94&8.180&7.801&92.27&93.03&82.08&83.37\\
    DiGS \cite{ben2022digs}&4.309&4.095&98.28&98.41&93.96&95.96&3.841&3.729&95.51&97.00&97.29&98.27\\
    NSH \cite{wang2023neural}&4.338&4.087&98.70&98.89&94.67&95.93&3.677&3.588&96.36&\textbf{97.93}&97.70&98.56\\
    StEik \cite{yang2024stabilizing}&4.048&4.032&98.67&98.86&94.57&96.28&3.675&3.583&96.30&97.72&97.62&98.51\\
    \revise{PG-SDF \cite{koneputugodage2024small}} &\revise{4.102}&\revise{3.980}&\revise{97.94}&\revise{98.26}&\revise{93.98}&\revise{96.15}&\revise{4.183}&\revise{3.767}&\revise{90.38}&\revise{94.67}&\revise{92.29}&\revise{97.03} \\
     \midrule
    Ours &\textbf{3.848}&\textbf{3.862}&\textbf{98.86}&\textbf{98.95}&\textbf{95.16}&\textbf{96.60}&\textbf{3.499}&\textbf{3.438}&\textbf{96.56}&97.91&\textbf{97.91}&\textbf{98.59}\\
    \bottomrule
\end{tabular}}}
\end{table*}

\begin{figure*}[htbp]
    \centering
    \begin{overpic}[width=\textwidth]{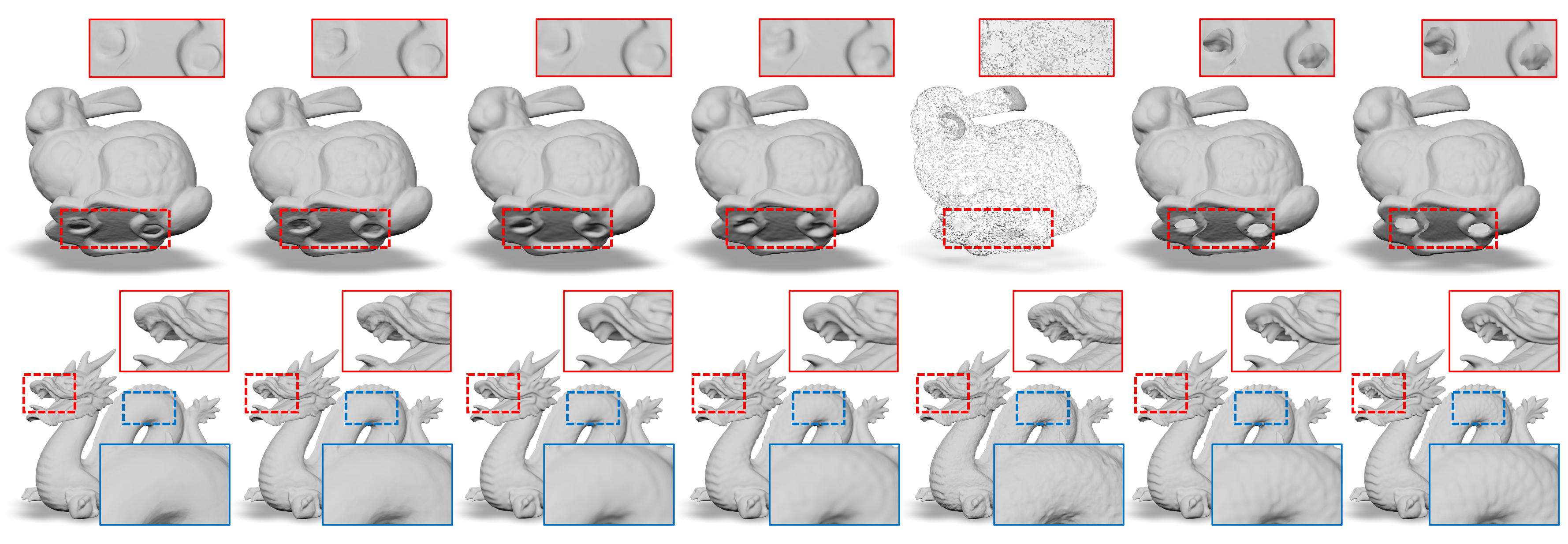}%
               \put(2.8, -1.0){\small{CAP-UDF} \cite{zhou2024cap}}
                \put(16.0,-1.0){\small{LevelSetUDF} \cite{zhou2023learning}}
                \put(33.5, -1.0){\small{NSH} \cite{wang2023neural}}
                \put(47.2, -1.0){\small{StEik} \cite{yang2024stabilizing}}
                \put(60.3, -1.0){\small{\revise{PG-SDF}} \cite{koneputugodage2024small}}
                \put(76.2, -1.0){\small{Ours}}
                \put(90.5, -1.0){\small{GT}}
    \end{overpic}
    \caption{Visual comparison of surface reconstruction under the Stanford 3D Scanning Repository.}
    \label{fig:stanford}
\end{figure*}

\subsection{Ablation Studies}
\label{subsec:ablation}
\textbf{Loss Functions.}
We use different combinations of loss functions to validate the necessity of each loss term. We report the quantitative comparison in Tab. \ref{table:loss} and provide the visual comparison in Fig. \ref{fig:ablation}. The result demonstrates that the Monge-Ampere loss $L_{MA}$ is the core loss of our method and its absence would result in reconstruction failure. $L_{Dirichlet}$ and $L_{Neumann}$ each contribute to enhancing the reconstruction accuracy. Although the impact of $L_{non-manifold}$ on performance enhancement is limited, it aids in stabilizing network training and ensures a uniform set of loss weights across the majority of objects in our experiments.
\begin{table}[htbp]
  \caption{Quantitative comparison about different combinations of loss functions. "-" means reconstruction failure.}
  \label{table:loss}
  \centering{
  \resizebox{\linewidth}{!}{
  \begin{tabular}{l|cccccc}
    \toprule
    \multirow{2}{*}{Loss Function} & \multicolumn{2}{c}{CD $\downarrow$} &\multicolumn{2}{c}{Normal $\uparrow$}  &\multicolumn{2}{c}{F-Score $\uparrow$}\\
    &mean&median&mean&median&mean&median\\
    \midrule
    $L_{Dir}$&--&--&--&--&--&--\\
    $L_{Neu}$&--&--&--&--&--&--\\
    $L_{Dir}+L_{Neu}$&--&--&--&--&--&--\\
    $L_{Dir}+L_{MA}$&4.533&4.584&91.74&93.35&91.43&91.47\\
    $L_{Neu}+L_{MA}$&3.738&3.594&\textbf{96.61}&97.89&97.64&98.43\\    $L_{Dir}+L_{Neu}+L_{MA}$&\textbf{3.499}&\textbf{3.438}&96.50&97.89&\textbf{97.93}&\textbf{98.60}\\    $L_{Dir}+L_{Neu}+L_{MA}+L_{non}$&\textbf{3.499}&\textbf{3.438}&96.56&\textbf{97.91}&97.91&98.59\\
    \bottomrule
\end{tabular}}}
\end{table}
\begin{figure}[tbp]
    \centering
     \begin{overpic}[width=0.5\textwidth]{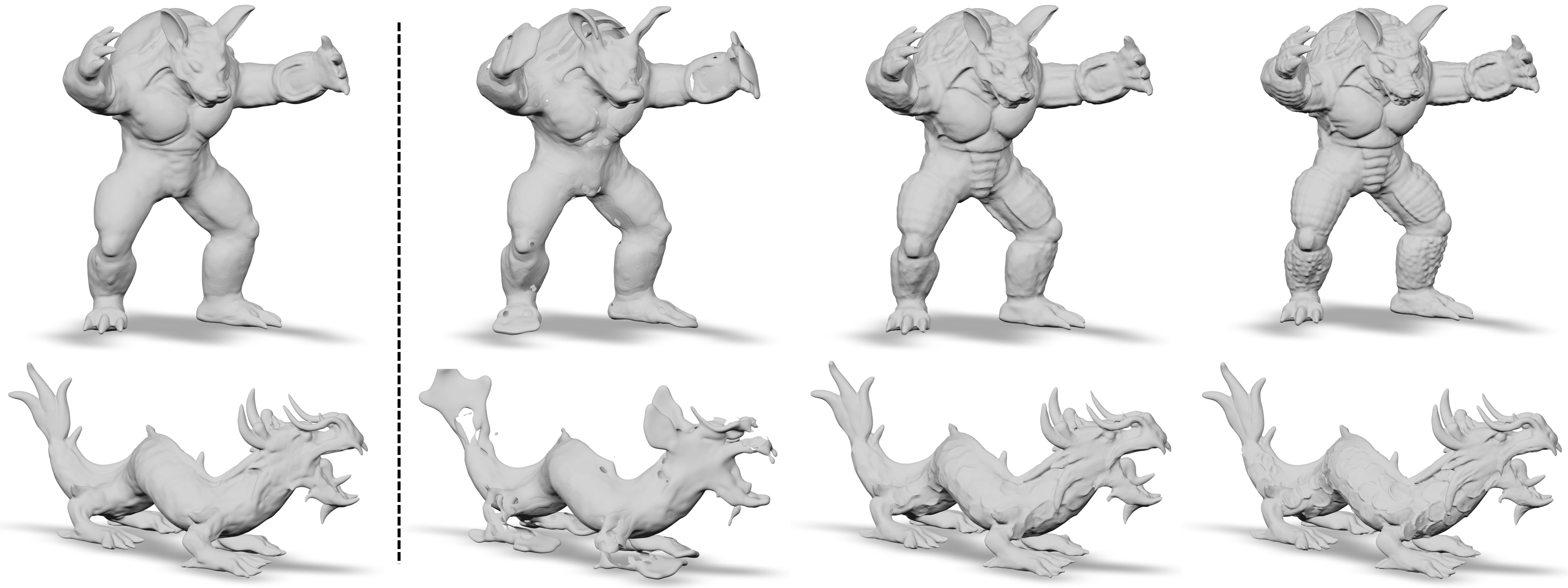}
               \put(2.0, -3){\small{w/o $L_{Neumann}$}}
               \put(32.5,-3){\small{$K=100$}}
               \put(57,-3){\small{$K=2000$}}
               \put(82,-3){\small{$K=1000$}}
    \end{overpic}
    \caption{Visualization of reconstruction results with different values of $K$ and combinations of loss functions.}
    \label{fig:ablation}
\end{figure}
\vspace{1em}

\noindent\textbf{The Irreplaceability of Monge-Ampere Regularization.}
Based on Eq. \eqref{equation:first-order2}, we define the following loss function:
\begin{equation}
L_{Eikonal}^{'} = \int_{\mathcal{P} \cup \mathcal{Q}} \left| \Vert \nabla f(\boldsymbol{x};\theta) \Vert_2^2 - 4K \cdot f(\boldsymbol{x};\theta) \right| d\boldsymbol{x},
\end{equation}
where $\mathcal{P}$ represents the input point cloud and $\mathcal{Q}$ is a set of sample points from space. This term is similar to the Eikonal term used in SDF-based approaches. However, $t(\boldsymbol{x}) = 0$ is a trivial solution to Eq. \eqref{equation:first-order2}, meaning that when $f(\boldsymbol{x};\theta)$ outputs a constant value of 0, the loss also becomes 0. This issue is alleviated by adding the term $L_{non-manifold}$. Furthermore, since this loss encourages $\Vert \nabla f(\boldsymbol{x};\theta) \Vert_2^2 = 4K \cdot f(\boldsymbol{x};\theta)$, rather than directly encouraging $\Vert \nabla f(\boldsymbol{x};\theta) \Vert_2^2$ to match the ground-truth values, and stochastic gradient descent typically leads to a local optimum, the converged network will exhibit the scenario where $\Vert \nabla f(\boldsymbol{x};\theta) \Vert_2^2 \approx 4K \cdot f(\boldsymbol{x};\theta)$, but neither will match the true values. Consequently, as shown in Fig. \ref{fig:Eikonal}, when the term $L_{Eikonal}^{'}$ is used in place of Monge-Ampere regularization in the total loss function, the network cannot effectively learn the S\textsuperscript{2}DF. On the contrary, our Monge-Ampere regularization fundamentally resolves the issue of $f(\boldsymbol{x};\theta)$ outputting a constant value of 0 and directly encourages $Det(H_f(\boldsymbol{x}) - 2K \cdot I)$ to match the ground-truth values.
\begin{figure}[tbp]
    \centering
     \begin{overpic}[width=0.5\textwidth]{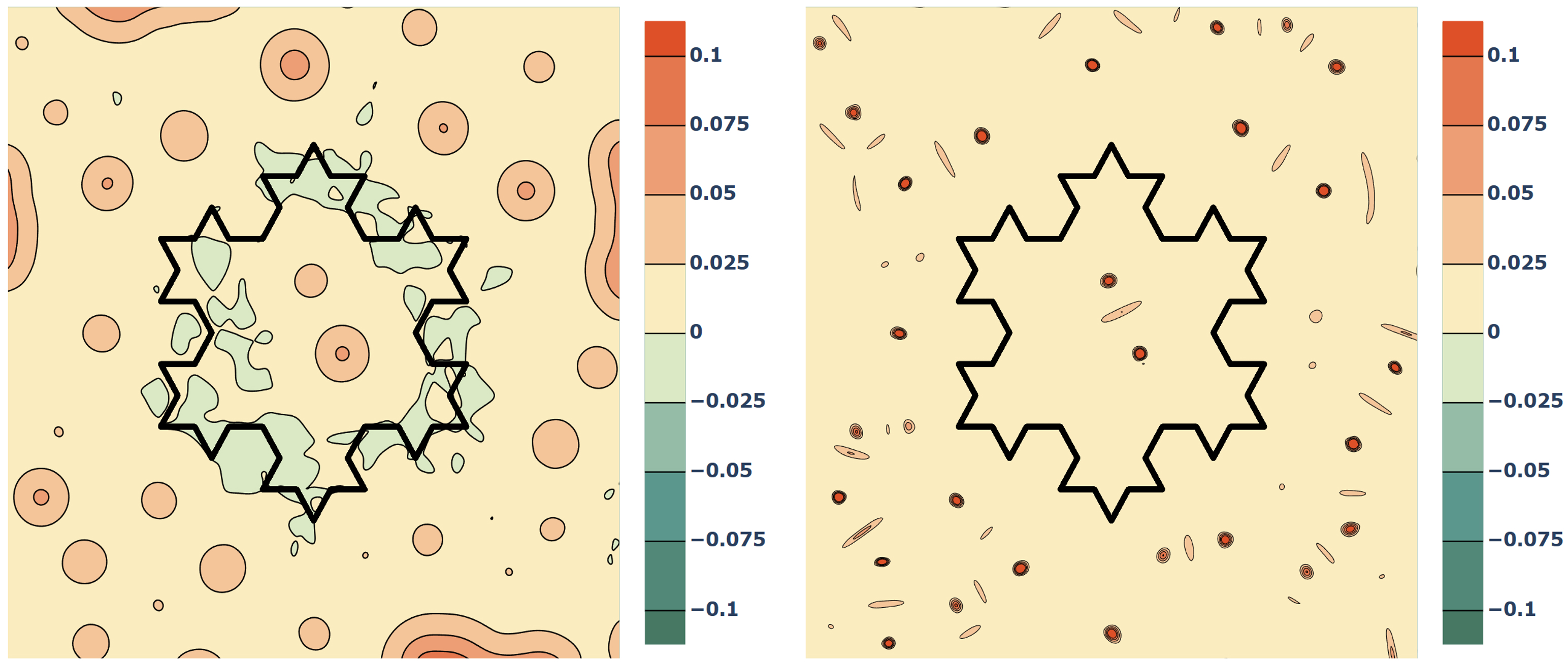}
        \put(14.2, -3.8){\small{$f(\boldsymbol{x};\theta)$}}
         \put(64.5,-3.8){\small{$L_{Eikonal}^{'}$}}
    \end{overpic}
    \vspace{-3mm}
    \caption{Visualization of reconstruction results when $L_{Eikonal}^{'}$ is used instead of Monge-Ampere regularization in the total loss function. The bold line represents the target curve.}
    \label{fig:Eikonal}
\end{figure}
\begin{table}[t]
  \caption{Quantitative comparison about different values of $K$. "-" means reconstruction failure.}
  \label{table:K}
  \centering{
  \resizebox{\linewidth}{!}{
  \begin{tabular}{l|cccccc}
    \toprule
    \multirow{2}{*}{$K$} & \multicolumn{2}{c}{CD $\downarrow$} &\multicolumn{2}{c}{Normal $\uparrow$}  &\multicolumn{2}{c}{F-Score $\uparrow$}\\
    &mean&median&mean&median&mean&median\\
    \midrule
    1&-&-&-&-&-&-\\
    100&6.209&6.585&86.83&88.67&79.99&78.85\\
    500&3.557&3.488&95.98&97.50&97.86&98.54\\    1000&\textbf{3.499}&\textbf{3.438}&\textbf{96.56}&\textbf{97.91}&\textbf{97.91}&\textbf{98.59}\\
    2000&3.634&3.564&95.60&97.22&97.73&98.51\\
    \bottomrule
\end{tabular}}}
\end{table}

\begin{table}[t]
  \caption{\revise{Quantitative comparison under different network initialization schemes.}}
  \label{table:initialization}
  \centering{
  \resizebox{\linewidth}{!}{
  \begin{tabular}{l|cccccc}
    \toprule
    \multirow{2}{*}{Initialization Scheme} & \multicolumn{2}{c}{CD $\downarrow$} &\multicolumn{2}{c}{Normal $\uparrow$}  &\multicolumn{2}{c}{F-Score $\uparrow$}\\
    &mean&median&mean&median&mean&median\\
    \midrule
    MFGI \cite{ben2022digs}&3.652&3.600&95.14&96.64&97.66&98.32\\
    SIREN \cite{sitzmann2020implicit}&\textbf{3.499}&\textbf{3.438}&\textbf{96.56}&\textbf{97.91}&\textbf{97.91}&\textbf{98.59}\\
    \bottomrule
\end{tabular}}}
\end{table}

\begin{figure}[tp]
    \centering
     \begin{overpic}[width=0.5\textwidth]{Fig/revision/init.jpg}
               \put(4.4, -3){\small{MFGI \cite{ben2022digs}}}
               \put(28.0,-3){\small{SIREN \cite{sitzmann2020implicit}}}
               \put(55.2,-3){\small{MFGI \cite{ben2022digs}}}
               \put(80.5,-3){\small{SIREN \cite{sitzmann2020implicit}}}
    \end{overpic}
    \caption{\revise{Visualization of reconstruction results under different network initialization schemes.}}
    \label{fig:initialization}
\end{figure}
\vspace{1em}
\noindent\textbf{The Effect of $K$.}
To validate the effectiveness of $K$ and determine its optimal value, we set $K$ to different values to conduct experiments. Based on Tab. \ref{table:K} and Fig. \ref{fig:ablation}, the reconstruction results with $K$ set to 1000 achieve the best performance among all the tested values. When $K$ is set to 1, as the ground-truth S\textsuperscript{2}DF values for the points near the zero-level set are very close, the neural network fails to effectively distinguish these points, making it difficult to extract the zero-level set from it. In contrast, setting 
$K$ to 1000 significantly amplifies the differences between the S\textsuperscript{2}DF values of points near the zero-level set, facilitating more accurate learning of the S\textsuperscript{2}DF.

The experiments offer a general guideline for users to set $K$. If $K$ is assigned a small value, it cannot adequately address the issue of S\textsuperscript{2}DF values being extremely close to each other near the zero-level set. The value of $K$ cannot be arbitrarily large either. The neural network is trained to have its Hessian possess an eigenvalue of $2K$. If the value of $K$ is too large, the ideal convergent network’s Hessian will have a very large eigenvalue. However, the eigenvalues of the network's Hessian are relatively small at initialization. Hence, the larger the value of $K$, the greater the difference between the initialized network and the ideal convergent network, making the training more difficult. This explains the decrease in reconstruction quality when $K$ is set to 2000.

\vspace{1em}
\noindent\revise{\textbf{Network Initialization Scheme.} To identify the optimal initialization scheme and evaluate the convergence of our method under different initialization settings, we conduct experiments using two SIREN \cite{sitzmann2020implicit} initialization schemes: (1) the original scheme proposed by SIREN and (2) the multi-frequency geometric initialization (MFGI) introduced by DiGS \cite{ben2022digs}. The quantitative results in Tab. \ref{table:initialization} and the visual comparisons in Fig. \ref{fig:initialization} demonstrate that our method successfully converges to S\textsuperscript{2}DF under both initialization schemes. The SIREN-proposed initialization outperforms MFGI, as MFGI initializes the network to approximate the SDF of a sphere, which differs significantly from the S\textsuperscript{2}DF we aim to learn. In contrast, the SIREN-proposed initialization strategy offers greater versatility, adapting to a broader  range of tasks.}

\subsection{Runtime Performance}
Finally, we compare the time consumption of our method for learning one shape with SIREN \cite{sitzmann2020implicit}, CAP-UDF \cite{zhou2024cap}, NSH \cite{wang2023neural}, LevelSetUDF \cite{zhou2023learning} and StEik \cite{yang2024stabilizing} using the object-level Stanford 3D Scanning Repository and the scene-level 3D Scene dataset \cite{zhou2013dense}. Tab. \ref{table:time} records the time costs for each method. The result demonstrates that our time cost is higher than that of SDF-based methods utilizing the Eikonal term, such as SIREN and NSH. Because computing the determinant of the Hessian is complex and increases the overhead of back-propagation. However, compared to UDF-based methods such as CAP-UDF and LevelSetUDF, our method uses less time to achieve better performance. Particularly, for scene-level models, we maintain the same experimental setup as for object-level models, resulting in nearly identical time requirements for both. However, CAP-UDF and LevelSetUDF require more iterations for scene-level models compared to object-level models, leading to longer training times.
\begin{table}[htbp]
  \caption{Runtime comparison in minutes for learning one shape.}
  \label{table:time}
  \resizebox{\linewidth}{!}{
  \begin{tabular}{c|cc|ccc|c}
    \toprule
     &CAP-UDF \cite{zhou2024cap}&LevelSetUDF \cite{zhou2023learning}&SIREN \cite{sitzmann2020implicit}&NSH \cite{wang2023neural}&StEik\cite{yang2024stabilizing}&Ours\\
    \midrule
    Object-level &15.94&18.65&3.205&10.38&6.008&12.54\\
    Scene-level &23.04&50.44&-&-&-&12.56\\
    \bottomrule
\end{tabular}}
\end{table}

\section{Limitations}
\label{limitation}
First, as discussed in previous sections, our method effectively reconstructs surfaces of various types and accurately captures the geometric details present in the input point clouds. However, its high fidelity to the input data and ability to represent arbitrary shapes may lead to misinterpretations of gaps between neighboring points as intrinsic properties of the underlying surfaces when handling sparse point clouds. This results in holes in the reconstructed surfaces. Upsampling the input point clouds prior to reconstruction can effectively mitigate this limitation. In the future, we will explore more direct and effective approaches to enhance our method’s performance with sparse point clouds. 

\revise{Second, our Dirichlet, Neumann, and Monge-Ampere losses directly constrain the function values, gradients, and Hessians of the network at input points, respectively, allowing our method to learn an accurate zero-level set. However, the network’s outputs at other spatial locations are regulated solely by the Monge-Ampere loss, and our method is not tailored to explicitly preserve shape features. Consequently, the learned implicit field exhibits a certain degree of over-smoothing in structures like the corners of a snowflake.}

\revise{Third, since we randomly sample points from the input point cloud during training, the sampled boundary points for non-closed shapes, such as circular arcs, vary slightly in each iteration. This results in suboptimal learning performance at the boundaries of such shapes. A sampling method that maintains stable boundary points across multiple samples is worth exploring in the future.}

\revise{Last but not least, our method is sensitive to the value of $K$, which typically needs to be relatively large. Although the initialization strategy proposed by SIREN \cite{sitzmann2020implicit} is feasible, it is not specifically designed for learning S\textsuperscript{2}DF. In particular, it initializes the eigenvalues of the network's Hessian to very small values, making it less effective for learning S\textsuperscript{2}DF with large $K$. Developing a more effective approach to address the slow function value variation of the squared distance function near the target surface and designing a network initialization strategy specifically for learning S\textsuperscript{2}DF are promising directions for future research.}

\section{Conclusion}
\label{conclusion}
In this paper, we have introduced a novel implicit representation called  S\textsuperscript{2}DF, which is capable of modeling arbitrary shapes while maintaining differentiability at the zero-level set. We analyzed the mathematical properties of S\textsuperscript{2}DF and subsequently proposed a method to effectively learn S\textsuperscript{2}DF from raw unoriented point clouds by Monge-Ampere regularization, which encourages the Hessian of the neural network to consistently possess an eigenvalue of $2K$. \revise{Thanks to the differentiability of the S\textsuperscript{2}DF at the zero-level set, our approach, compared to UDF-based methods, not only constrains the function values at the input point cloud during neural network learning, but also imposes constraints on the first- and second-order derivatives, which enables the learning of a more accurate zero-level set.} Extensive experiments on multiple datasets demonstrate that our method, which does not require ground-truth distance values as supervision during training, can reconstruct arbitrary surfaces and achieve higher accuracy than the existing state-of-the-art approaches.

\bibliographystyle{IEEEtran}
\bibliography{IEEEabrv,bibliography}

% \bf{If you include a photo:}\vspace{-33pt}
\begin{IEEEbiography}[{\includegraphics[width=1in,height=1.25in,clip,keepaspectratio]{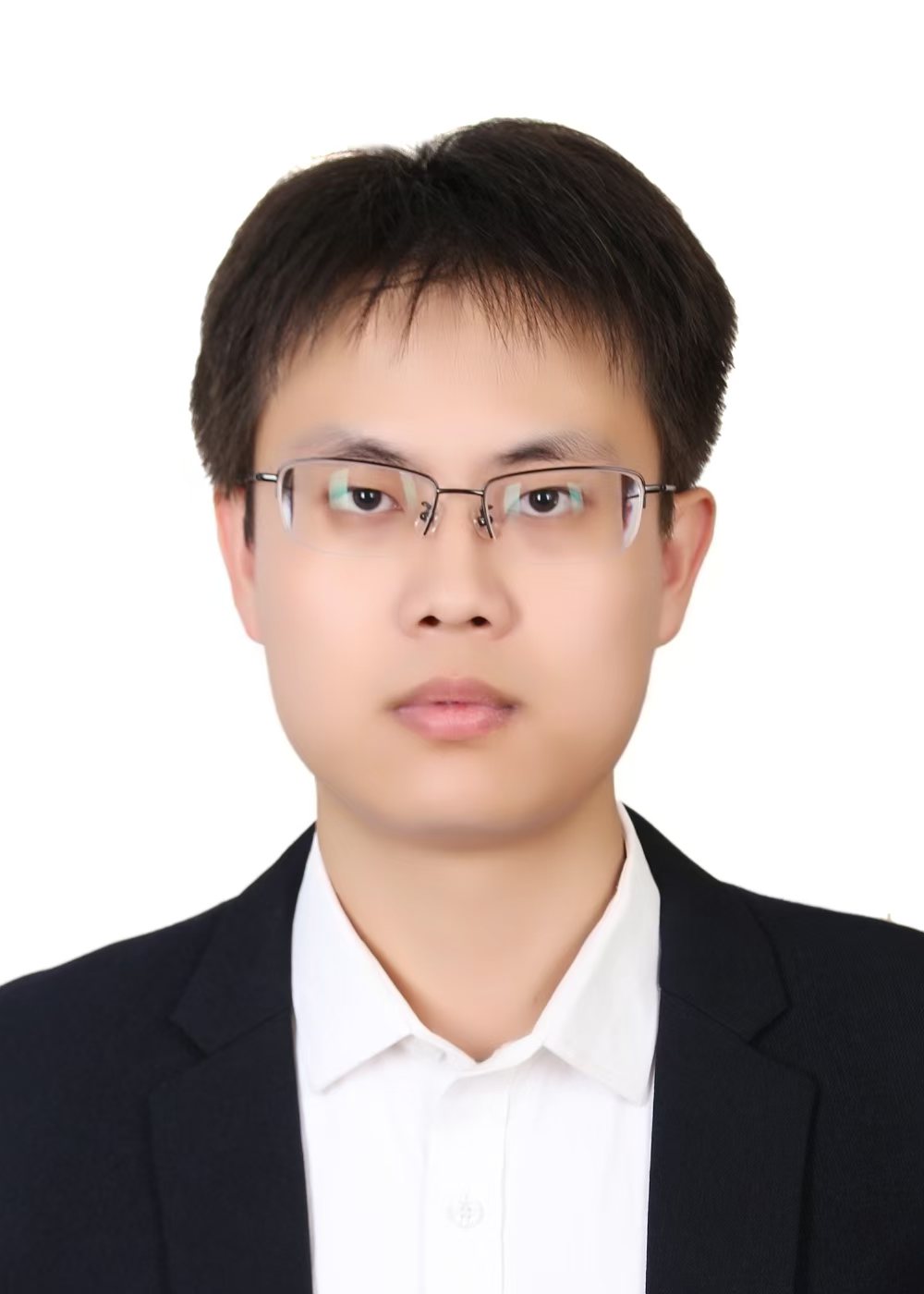}}]{Chuanxiang Yang} received the BE degree from the School of Software, Shandong University, Jinan, in 2023. He is currently working toward the master’s degree in software engineering with the School of Software, Shandong University. His main research interests include computer vision, deep learning and 3D reconstruction.
\end{IEEEbiography}

\begin{IEEEbiography}[{\includegraphics[width=1in,height=1.25in,clip,keepaspectratio]{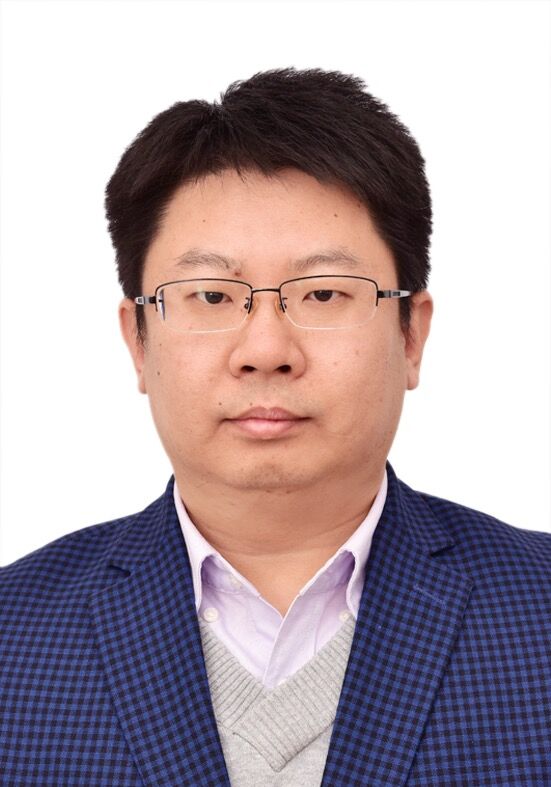}}]{Yuanfeng Zhou} received a master's and Ph.D. from the School of Computer Science and Technology, Shandong University, Jinan, China, in 2005 and 2009, respectively. He held a postdoctoral position with the Graphics Group, Department of Computer Science, The University of Hong Kong, Hong Kong, from 2009 to 2011. He is currently a professor with the School of Software, Shandong University, where he is also the leader of the IGIP Laboratory. His current research interests include geometric modeling, information visualization, and image processing.
\end{IEEEbiography}

\begin{IEEEbiography}[{\includegraphics[width=1in,height=1.25in,clip,keepaspectratio]{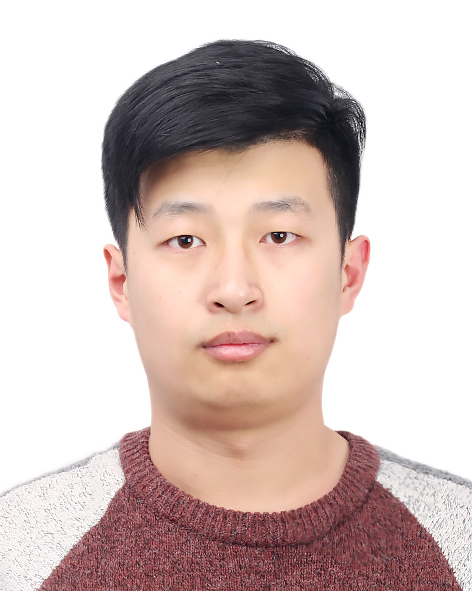}}]{Guangshun Wei} received a  Ph.D. from the School of Software, Shandong University, Jinan, in 2017 and 2022. He held a post-doctoral position with the Graphics Group, Department of Computer Science, The University of Hong Kong, Hong Kong, from 2022 to 2023.  His current research interests include medical image processing and geometric analysis.
\end{IEEEbiography}

\begin{IEEEbiography}[{\includegraphics[width=1in,height=1.25in,clip,keepaspectratio]{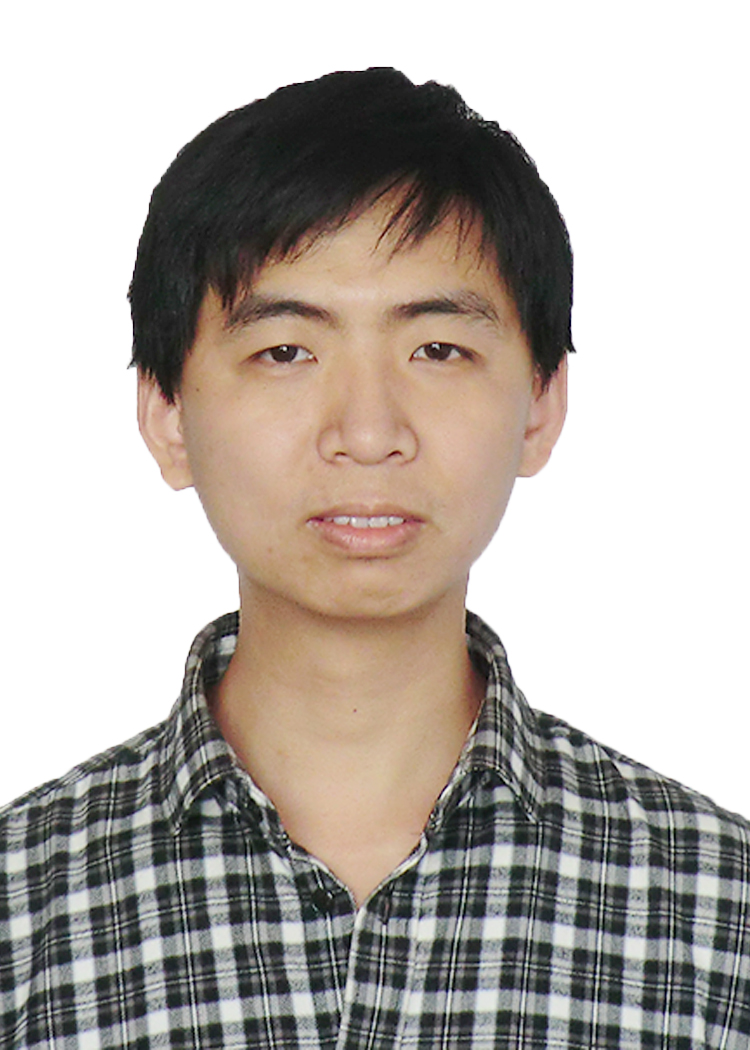}}]{Long Ma} is an associate researcher at the School of Software, Shandong University. He received his Ph.D degree from the School of Computer Science and Technology, Shandong University, in 2017. His research focuses on computer graphics, geometry modeling, and mechanical simulation. 
\end{IEEEbiography}

\begin{IEEEbiography}[{\includegraphics[width=1in,height=1.25in,clip,keepaspectratio]{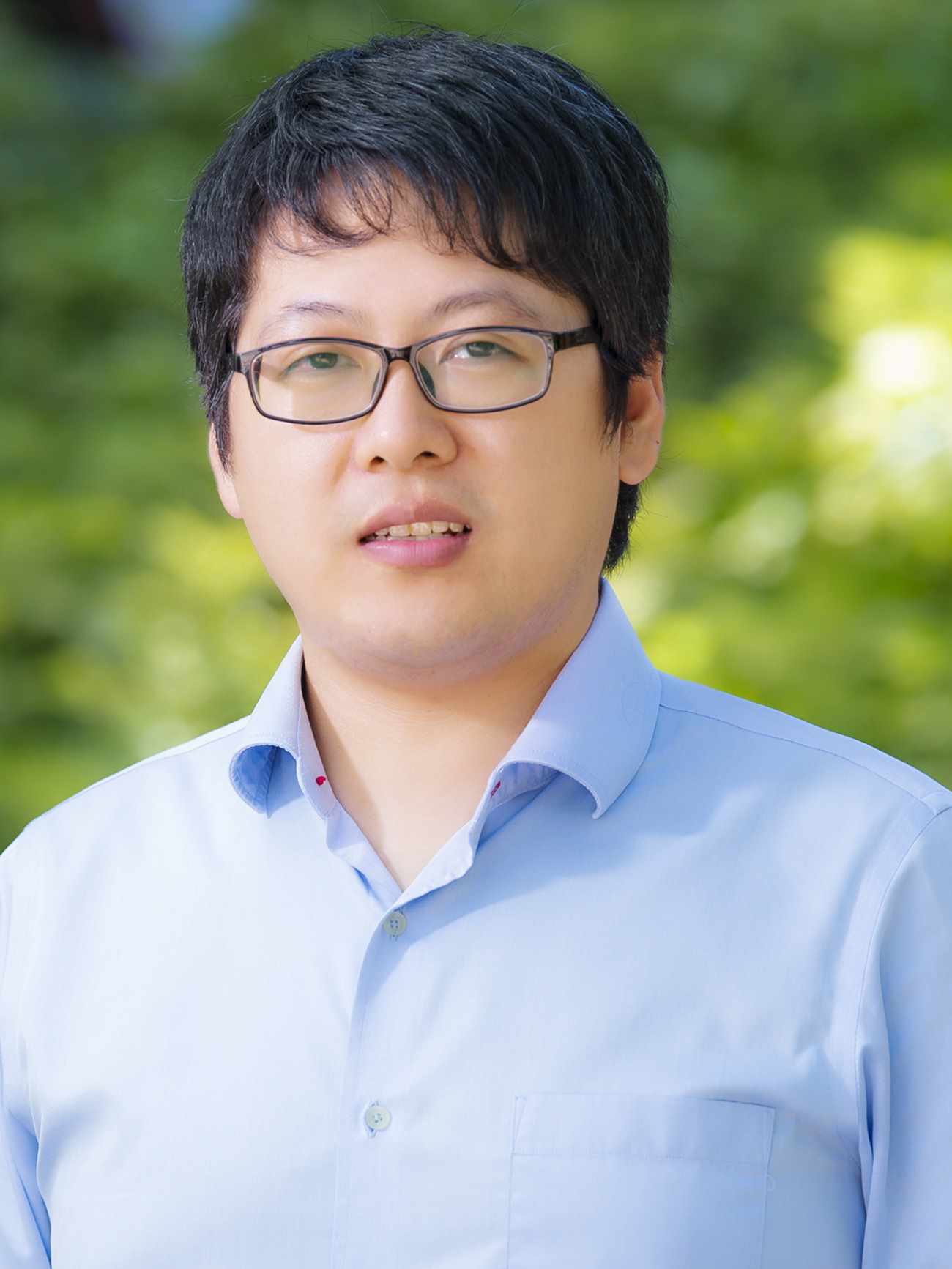}}]{Junhui Hou} (Senior Member, IEEE) is an Associate Professor with the Department of Computer Science, City University of Hong Kong. He holds a B.Eng. degree in information engineering (Talented Students Program) from the South China University of Technology, Guangzhou, China (2009), an M.Eng. degree in signal and information processing from Northwestern Polytechnical University, Xi’an, China (2012), and a Ph.D. degree from the School of Electrical and Electronic Engineering, Nanyang Technological
University, Singapore (2016). His research interests are multi-dimensional visual computing.
Dr. Hou received the Early Career Award (3/381) from the Hong Kong Research Grants Council in 2018 and the NSFC Excellent Young Scientists Fund in 2024. He has served or is serving as an Associate Editor for IEEE Transactions on Visualization and Computer Graphics, IEEE Transactions on Image Processing, IEEE Transactions on Multimedia, and IEEE Transactions
on Circuits and Systems for Video Technology.
\end{IEEEbiography}

\begin{IEEEbiography}[{\includegraphics[width=1in,height=1.25in,clip,keepaspectratio]{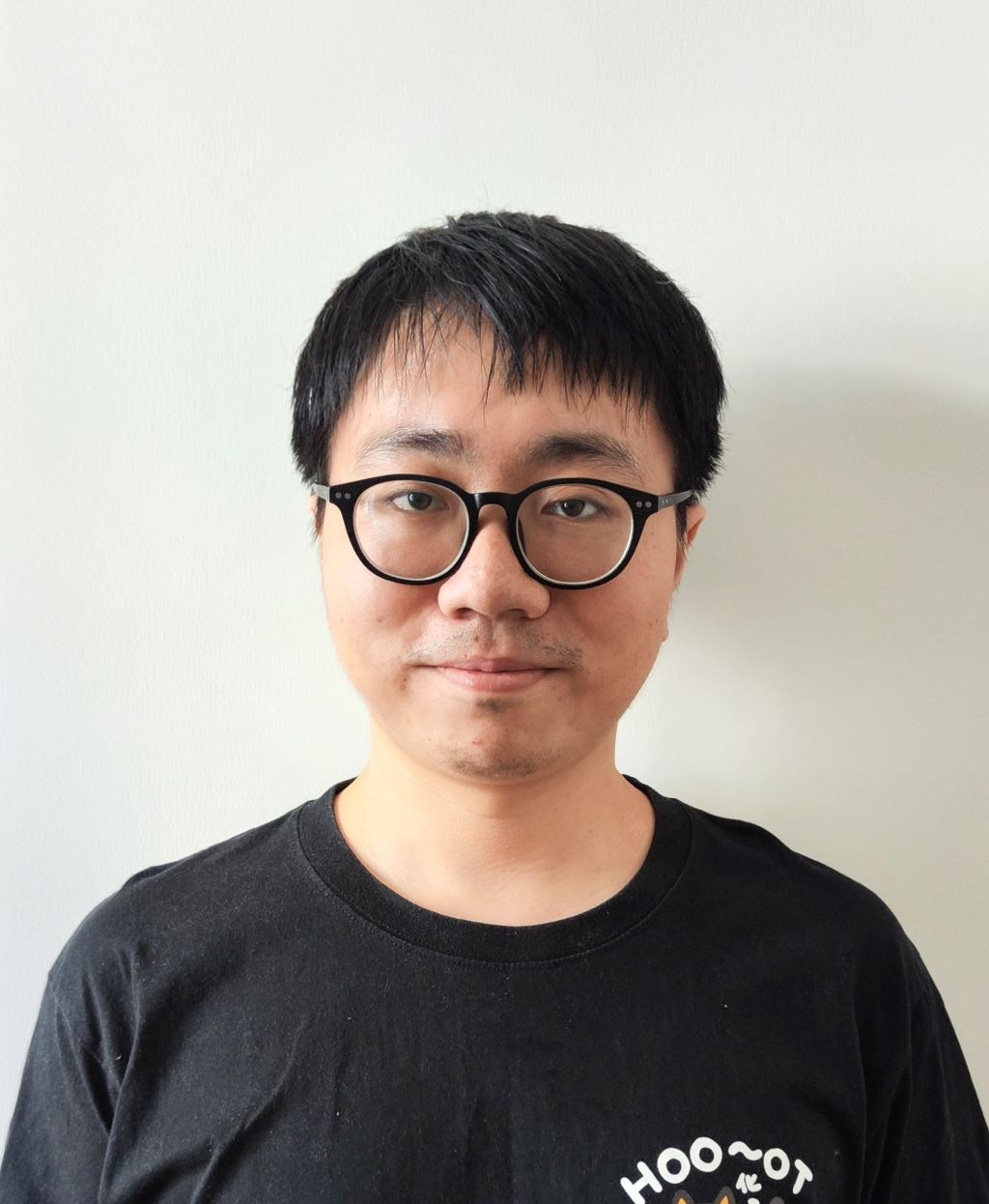}}]{Yuan Liu} is a postdoc researcher at NTU. He received his PhD degree in the University of Hong Kong in 2024. His research mainly concentrates on 3D vision and graphics. He currently works on topics about 3D AIGC including neural rendering, neural representations, and 3D generative models.
\end{IEEEbiography}

\begin{IEEEbiography}[{\includegraphics[width=1in,height=1.25in,clip,keepaspectratio]{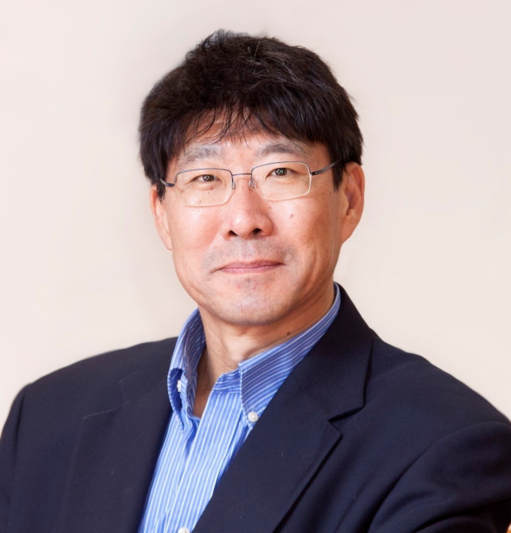}}]{Wenping Wang} (Fellow, IEEE) is a Professor of Computer Science and Engineering at Texas A\&M University. He conducts research in computer graphics, computer vision, scientific visualization, geometric computing, medical image processing, and robotics, and he has published over 400 papers in these fields. He received the John Gregory Memorial Award, Tosiyasu Kunii Award, and Bezier Award for contributions in geometric computing and shape modeling.
\end{IEEEbiography}

\includepdf[pages=-]{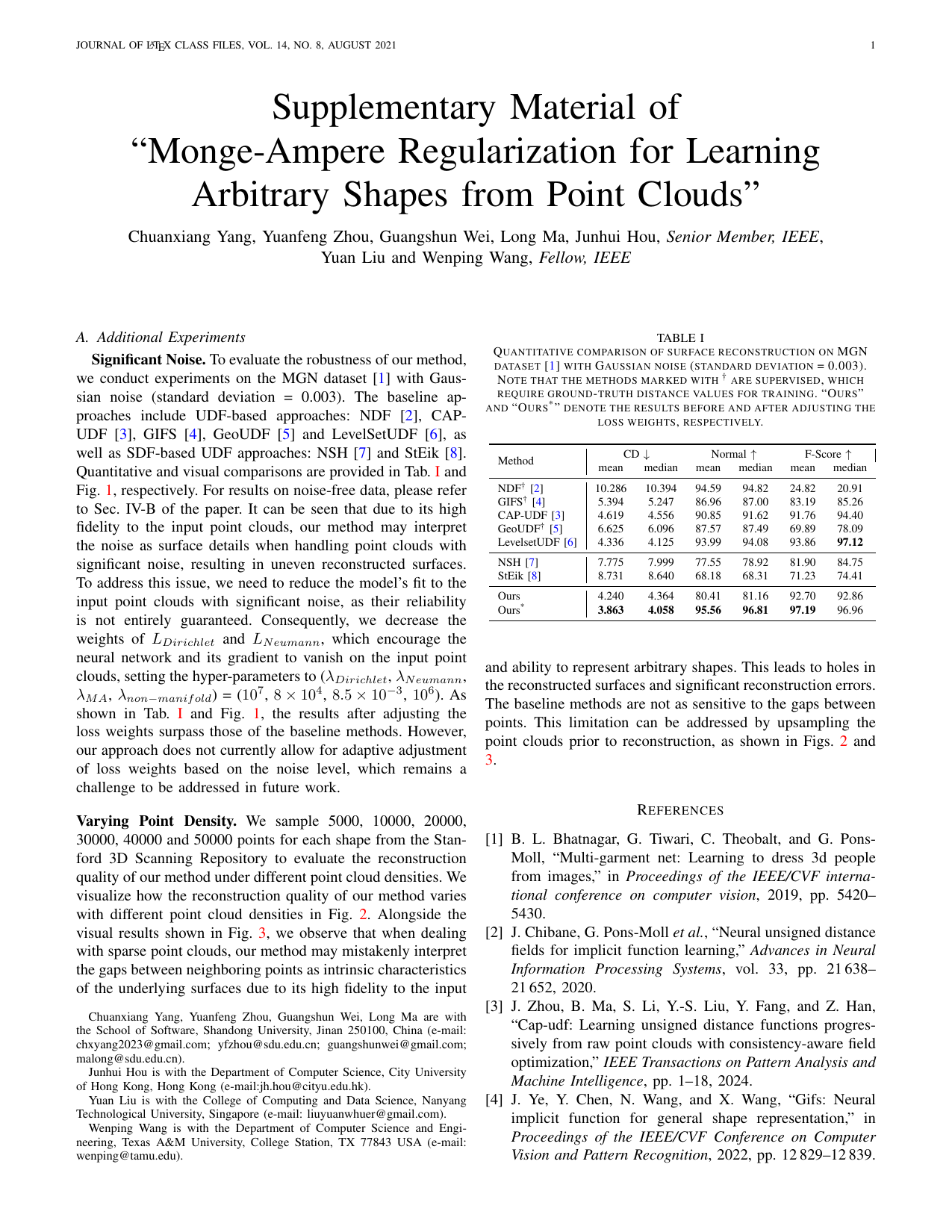}
\end{document}